\begin{document}

\title{R2-D2: Repetitive Reprediction Deep Decipher for Semi-Supervised Deep Learning\thanks{
This work is supported by the National Natural Science Foundation of China (61772256, 61921006).}
}
%\subtitle{Do you have a subtitle?\\ If so, write it here}

%\titlerunning{Short form of title}        % if too long for running head

\author{Guo-Hua Wang \and
	Jianxin Wu\thanks{J. Wu is the corresponding author.}
}

%\authorrunning{Short form of author list} % if too long for running head

\institute{Guo-Hua Wang \at
              National Key Laboratory for Novel Software Technology, Nanjing University, Nanjing 210023, China. \\
              %Tel.: +123-45-678910\\
              %Fax: +123-45-678910\\
              \email{wangguohua@lamda.nju.edu.cn}           %  \\
%             \emph{Present address:} of F. Author  %  if needed
			\and
			Jianxin Wu \at
			National Key Laboratory for Novel Software Technology, Nanjing University, Nanjing 210023, China. \\
			%Tel.: +123-45-678910\\
			%Fax: +123-45-678910\\
			\email{wujx2001@gmail.com}           %  \\
}

\date{Received: date / Accepted: date}
% The correct dates will be entered by the editor

\maketitle

\begin{abstract}
Most recent semi-supervised deep learning (deep SSL) methods used a similar paradigm: use network predictions to update pseudo-labels and use pseudo-labels to update network parameters iteratively. However, they lack theoretical support and cannot explain why predictions are good candidates for pseudo-labels in the deep learning paradigm. In this paper, we propose a principled end-to-end framework named deep decipher (D2) for SSL. Within the D2 framework, we prove that pseudo-labels are related to network predictions by an exponential link function, which gives a theoretical support for using predictions as pseudo-labels. Furthermore, we demonstrate that updating pseudo-labels by network predictions will make them uncertain. To mitigate this problem, we propose a training strategy called repetitive reprediction (R2). Finally, the proposed R2-D2 method is tested on the large-scale ImageNet dataset and outperforms state-of-the-art methods by 5 percentage points.
\keywords{semi-supervised learning \and deep learning \and image classification}
% \PACS{PACS code1 \and PACS code2 \and more}
% \subclass{MSC code1 \and MSC code2 \and more}
\end{abstract}

\section{Introduction}
Deep learning has achieved state-of-the-art results on many visual recognition tasks. However, training these models often needs large-scale datasets such as ImageNet~\citep{imagenet}. Nowadays, it is easy to collect images by search engines, but image annotation is expensive and time-consuming. Semi-supervised learning (SSL) is a paradigm to learn a model with a few labeled data and massive amounts of unlabeled data. With the help of unlabeled data, the model performance may be improved.

With a supervised loss, unlabeled data can be used in training by assigning pseudo-labels to them.
Many state-of-the-art methods on semi-supervised deep learning used pseudo-labels implicitly. Temporal Ensembling~\citep{Temporal_Ensembling} used the moving average of network predictions as pseudo-labels. Mean Teacher~\citep{Mean_teacher} and Deep Co-training~\citep{DCT_2018_ECCV} employed another network to generate pseudo-labels. However, they produced or updated pseudo-labels in ad-hoc manners.
Although these methods worked well in practice, there are few theories to support them. A mystery in deep SSL arises: why can predictions work well as pseudo-labels?

In this paper, we propose an end-to-end framework called deep decipher (D2). Inspired by \citet{PENCIL}, we treat pseudo-labels as variables and update them by back-propagation, which are also learned from data. The D2 framework specifies a well-defined optimization problem, which can be properly interpreted as a maximum likelihood estimation over two set of variables (the network parameters and the pseudo-labels). Within deep decipher, we prove that there exists an exponential relationship between pseudo-labels and network predictions, leading to a theoretical support for using network predictions as pseudo-labels. Then, we further analyze the D2 framework and prove that pseudo-labels will become flat (i.e., their entropy is high) during training and there is an equality constraint bias in it. To mitigate these problems, we propose a simple but effective strategy, repetitive reprediction (R2). The improved D2 framework is named R2-D2 and obtains state-of-the-art results on several SSL problems.

Our contributions are as follows. 

\begin{itemize}
	\item We propose D2, a deep learning framework that deciphers the relationship between predictions and pseudo-labels. D2 updates pseudo-labels by back-propagation. To the best of our knowledge, D2 is the first deep SSL method that learns pseudo-labels from data end-to-end.
	\item Within D2, we prove that pseudo-labels are exponentially transformed from the predictions. Hence, it is reasonable for previous works to use network predictions as pseudo-labels. Meanwhile, many SSL methods can be considered as special cases of D2 in terms of certain aspects.
	\item To further boost D2's performance, we find some shortcomings of D2. In particular, we prove that pseudo-labels will become flat during the optimization. To mitigate this problem, we propose a simple but effective remedy, R2. We tested the R2-D2 method on ImageNet and it outperforms state-of-the-arts by a large margin. On small-scale datasets like CIFAR-10~\citep{cifar}, R2-D2 also produces state-of-the-art results.
\end{itemize}

\section{Related Works}

We first briefly review deep SSL methods and the related works that inspired this paper.

\citet{pseudo_label} is an early work on training deep SSL models by pseudo-labels, which picks the class with the maximum predicted probability as pseudo-label for unlabeled images and tested only on a samll-scale dataset MNIST~\citep{LeNet}. Label propagation~\citep{label_prop} can be seen as a form of pseudo-labels. Based on some metric, label propagation pushes the label information of each sample to the near samples. \citet{deep_label_prop} apply label propagation to deep learning models. \citet{IJCV_semi} use label propagation to solve the exhaustively propagating pairwise constraints problem. \citet{cvpr2019_pseudo_label} use the manifold assumption to generate pseudo-labels for unlabeled data. However, their method is complicated and relies on other SSL methods to produce state-of-the-art results.

Several recent state-of-the-art deep SSL methods can be considered as using pseudo-labels implicitly. Temporal ensembling~\citep{Temporal_Ensembling} proposes making the current prediction and the pseudo-labels consistent, where the pseudo-labels take into account the network predictions over multiple previous training epochs. Extending this idea, Mean Teacher~\citep{Mean_teacher} employs a secondary model, which uses the exponential moving average weights to generate pseudo-labels. Virtual Adversarial Training~\citep{VAT} uses network predictions as pseudo-labels, then they want the network predictions under adversarial perturbation to be consistent with pseudo-labels. Deep Co-Training~\citep{DCT_2018_ECCV} employs many networks and uses one network to generate pseudo-labels for training other networks. 

We notice that they all use the network predictions as pseudo-labels but a theory explaining its rationale in the deep learning context is missing. Within our D2 framework, we demonstrate that pseudo-labels will indeed be related to network predictions. That gives a support to using network predictions as pseudo-labels. Moreover, pseudo-labels of previous works were designed manually and ad-hoc, but our pseudo-labels are updated by training the end-to-end framework. Many previous SSL methods can also be considered as special cases of the D2 framework in terms of certain aspects in these methods.

There are some previous works in other fields that inspired this work.
Deep label distribution learning~\citep{DLDL} inspires us to use label distributions to encode the pseudo-labels. 
\citet{Noisy_Labels} studies the label noise problem. They find it is possible to update noisy labels to make them more precise during the training. PENCIL~\citep{PENCIL} proposes an end-to-end framework to train the network and optimize the noisy labels together. Our method is inspired by PENCIL~\citep{PENCIL}. In addition, inspired by \citet{TCP}, we analyze our algorithm from the gradient perspective. \footnote{Preliminary studies of the proposed R2-D2 method appeared as a conference presentation \citep{R2-D2}, available at \url{https://arxiv.org/abs/1908.04345}. }

\section{The R2-D2 Method}

We define the notations first. Column vectors and matrices are denoted in bold (e.g., $\mathbf{x}, \mathbf{X}$). When $\mathbf{x}\in \mathbb{R}^d$,  $x_i$ is the $i$-th element of the vector $\mathbf{x}$, $i\in[d]$, where $[d]:=\{1,2,\dots,d\}$. $\mathbf{w}_i$ denote the $i$-th column of matrix $\mathbf{W}\in\mathbb{R}^{d\times l}$, $i\in[l]$. And, we assume the dataset has $N$ classes.

\subsection{Deep decipher (D2)}

\begin{figure*}[t]
	\centering
	\includegraphics[width=0.65\linewidth]{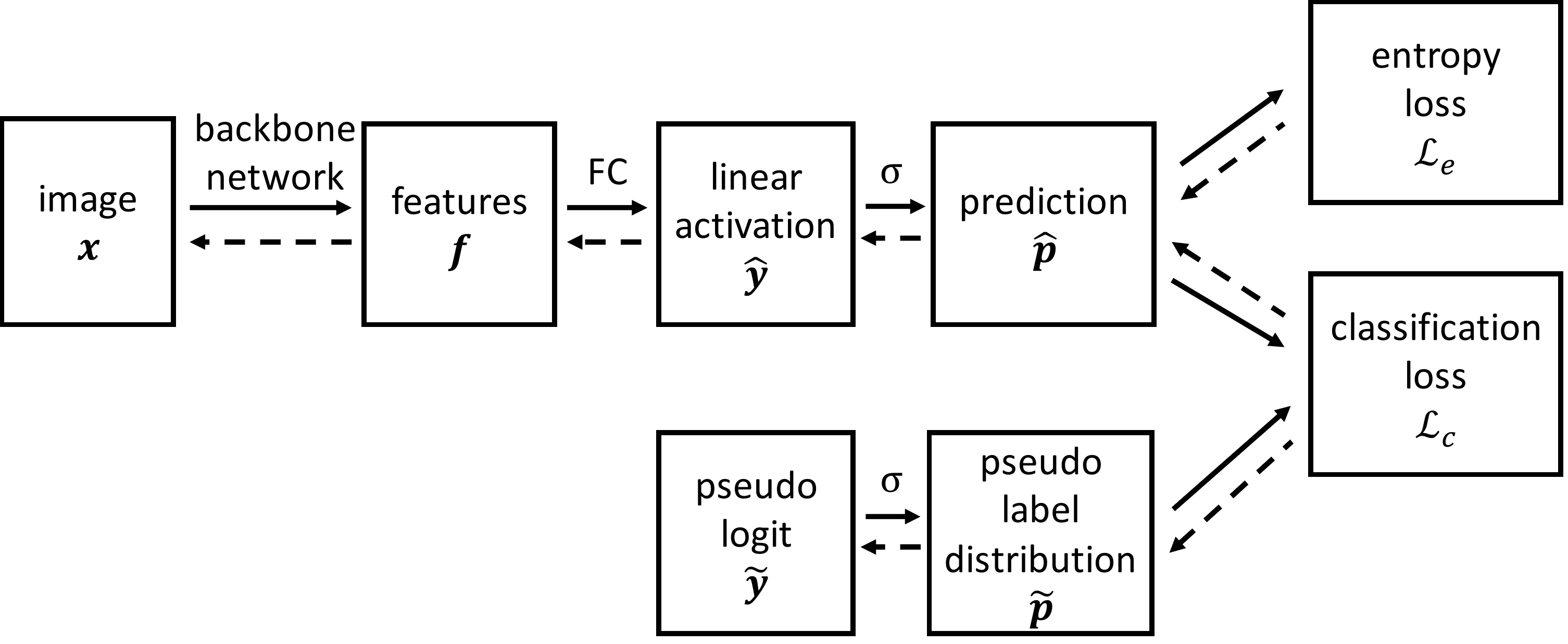}
	\caption{The pipeline of D2. Solid lines and dashed lines represent the forward and back-propagation processes, respectively.}
	\label{fig:framework}
\end{figure*}

Figure~\ref{fig:framework} shows the D2 pipeline, which is inspired by \citet{PENCIL}. Given an input image $\mathbf{x}$, D2 can employ any backbone network to generate feature $\mathbf{f}\in\mathbb{R}^D$. Then, the linear activation $\mathbf{\hat{y}}\in \mathbb{R}^N$ is computed as $\mathbf{\hat{y}}=\mathbf{W}^\mathsf{T}\mathbf{f}$, where $\mathbf{W}\in\mathbb{R}^{D\times N}$ are weights of the FC layer and we omit the bias term for simplicity. The softmax function is denoted as $\sigma(\mathbf{y}):\mathbb{R}^N\rightarrow\mathbb{R}^N$ and $\sigma(\mathbf{y})_i=\frac{\exp\left(y_i\right)}{\sum_{j=1}^{N}\exp\left(y_j\right)}$. Then, the prediction $\hat{\mathbf{p}}$ is calculated as $\hat{\mathbf{p}}=\sigma(\mathbf{\hat{y}})$
, hence
\begin{equation}
\hat{p}_n
=\sigma(\mathbf{\hat{y}})_n
=\sigma(\mathbf{W}^\mathsf{T}\mathbf{f})_n
=\frac{\exp(\mathbf{w}_n^\mathsf{T}\mathbf{f})}{\sum_{i=1}^{N}\exp(\mathbf{w}_i^\mathsf{T}\mathbf{f})}\,.
\end{equation}

We define $\tilde{\mathbf{y}}$ as the pseudo logit which is an unconstrained variable and \emph{can} be updated by back-propagation. Then, the pseudo label is calculated as $\tilde{\mathbf{p}} = \sigma(\tilde{\mathbf{y}})$ and it is a valid probability distribution. 

In the training, the D2 framework is initialized as follows. Firstly, we train the backbone network using only labeled examples, and use this trained network as the backbone network and FC in Figure~\ref{fig:framework}. For labeled examples, $\tilde{\mathbf{y}}$ is initialized by $K\mathbf{y}$, in which $K=10$ and $\mathbf{y}$ is the groundtruth label in the one-hot encoding. Note that $\tilde{\mathbf{y}}$ of labeled examples will \emph{not} be updated during D2 training. For unlabeled examples, we use the trained network to predict $\tilde{\mathbf{y}}$. That means we use the FC layer activation $\hat{\mathbf{y}}$ as the initial value of $\tilde{\mathbf{y}}$. The process of initializing pseudo-labels is called predicting pseudo-labels in this paper.
In the testing, we use the backbone network with FC layer to make predictions and the branch of pseudo-labels is removed.

Our loss function consists of $\mathcal{L}_c$ and $\mathcal{L}_e$. $\mathcal{L}_c$ is the classification loss and defined as $KL(\hat{\mathbf{p}}||\tilde{\mathbf{p}})$ as in \citet{PENCIL}, which is different from the classic KL-loss $ KL(\tilde{\mathbf{p}}||\hat{\mathbf{p}})$. $\mathcal{L}_c$ is used to make the network predictions match the pseudo-labels. $\mathcal{L}_e$ is the entropy loss, defined as $-\sum_{j=1}^{N}\hat{p}_j\log(\hat{p}_j)$. 
Minimizing the entropy of the network prediction can encourage the network to peak at only one category. 
So our loss function is defined as
\begin{align}
\label{loss-function}
\mathcal{L}
&=\alpha\mathcal{L}_c + \beta\mathcal{L}_e \notag \\
&=\alpha\sum_{j=1}^{N}\hat{p}_j\left[\log(\hat{p}_j)-\log(\tilde{p}_j)\right]-\beta\sum_{j=1}^{N}\hat{p}_j\log(\hat{p}_j)\,,
\end{align}
where $\alpha$ and $\beta$ are two hyperparameters. Although there are two hyperparameters in D2, we always set $\alpha=0.1$ and $\beta=0.03$ in all our experiments.

Then, we show that we can decipher the relationship between pseudo-labels and network predictions in D2, as shown by Theorem~\ref{thm:1}.
\begin{theorem}
	\label{thm:1}
	Suppose D2 is trained by SGD with the loss function $\mathcal{L}=\alpha\mathcal{L}_c + \beta\mathcal{L}_e$. Let $\hat{\mathbf{p}}$ denote the prediction by the network for one example and $\hat{p}_n$ is the largest value in $\hat{\mathbf{p}}$. After the optimization algorithm converges, we have $\tilde{p}_n\rightarrow\exp(-\frac{\mathcal{L}}{\alpha})\left(\hat{p}_n\right)^{1-\frac{\beta}{\alpha}}$.
\end{theorem}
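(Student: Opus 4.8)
The plan is to read ``the optimization algorithm converges'' as a statement about stationarity of the FC-layer activation $\hat{\mathbf{y}}$. Since $\mathcal{L}$ depends on the trainable weights $\mathbf{W}$ (and, through $\mathbf{f}$, on the backbone) only via $\hat{\mathbf{y}}=\mathbf{W}^{\mathsf{T}}\mathbf{f}$, and since $\partial\mathcal{L}/\partial\mathbf{w}_n=(\partial\mathcal{L}/\partial\hat{y}_n)\,\mathbf{f}$, a converged SGD run drives $\partial\mathcal{L}/\partial\hat{y}_n\to 0$ for every coordinate $n$ (for a generic nonzero feature; more carefully, one invokes enough network capacity that each per-example logit gradient can be annihilated). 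With that, the proof becomes purely computational: (i) write $\partial\mathcal{L}/\partial\hat{y}_n$ in closed form, and (ii) solve $\partial\mathcal{L}/\partial\hat{y}_n=0$ for $\tilde{p}_n$.

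For (i) I would differentiate through the softmax with $\partial\hat{p}_j/\partial\hat{y}_n=\hat{p}_j(\delta_{jn}-\hat{p}_n)$, remembering that $\tilde{p}_j$ is independent of $\hat{y}_n$. Two elementary identities carry the computation: $\sum_j\hat{p}_j(\delta_{jn}-\hat{p}_n)=0$, which cancels the ``$+1$'' produced when differentiating $\hat{p}_j\log\hat{p}_j$; and $\sum_j\hat{p}_j(\delta_{jn}-\hat{p}_n)a_j=\hat{p}_n a_n-\hat{p}_n\sum_j\hat{p}_j a_j$ for any coefficients $a_j$. Using these with $a_j=\log\hat{p}_j$ and with $a_j=\log\tilde{p}_j$ gives
\[
\frac{\partial\mathcal{L}_c}{\partial\hat{y}_n}=\hat{p}_n\bigl(\log\hat{p}_n-\log\tilde{p}_n\bigr)-\hat{p}_n\mathcal{L}_c,
\]
\[
\frac{\partial\mathcal{L}_e}{\partial\hat{y}_n}=-\hat{p}_n\log\hat{p}_n-\hat{p}_n\mathcal{L}_e,
\]
and, after forming $\alpha\,\partial\mathcal{L}_c/\partial\hat{y}_n+\beta\,\partial\mathcal{L}_e/\partial\hat{y}_n$ and substituting $\mathcal{L}=\alpha\mathcal{L}_c+\beta\mathcal{L}_e$, everything collapses to
\[
\frac{\partial\mathcal{L}}{\partial\hat{y}_n}=\hat{p}_n\Bigl[(\alpha-\beta)\log\hat{p}_n-\alpha\log\tilde{p}_n-\mathcal{L}\Bigr].
\]

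For (ii), since $\hat{p}_n>0$ I cancel it, set the bracket to zero, and solve: $\log\tilde{p}_n=(1-\tfrac{\beta}{\alpha})\log\hat{p}_n-\tfrac{\mathcal{L}}{\alpha}$, hence $\tilde{p}_n\to\exp(-\tfrac{\mathcal{L}}{\alpha})(\hat{p}_n)^{1-\beta/\alpha}$, which is the assertion. I would close by noting that the stationarity equation actually holds for every coordinate $n$, so the hypothesis ``$\hat{p}_n$ is the largest entry'' only serves to single out the pseudo-label's dominant coordinate: the conclusion then reads as ``the pseudo-label agrees with the predicted class and is an exponential reshaping of the predicted probability,'' which is exactly the heuristic previous work relied on.

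The one genuinely delicate point — everything else is bookkeeping — is the passage in (i) from ``SGD converged'' to ``$\partial\mathcal{L}/\partial\hat{y}_n=0$ for this example and this coordinate.'' Convergence only guarantees that the gradient with respect to the actual parameters, aggregated over the training set, vanishes; getting a per-example, per-coordinate condition requires an over-parameterization/interpolation-type assumption, or simply treating $\hat{\mathbf{y}}$ itself as the free variable. I would also flag a mild tension worth acknowledging: $\tilde{\mathbf{y}}$ is likewise updated by back-propagation, and imposing $\partial\mathcal{L}/\partial\tilde{y}_n=0$ in addition forces $\tilde{p}_n=\hat{p}_n$, which is consistent with the displayed formula only in a degenerate (essentially one-hot) limit — so the result is best read as the relation pinned down by the $\hat{\mathbf{y}}$-gradient at the point where training effectively halts, which is why the statement uses ``$\to$'' rather than ``$=$''.
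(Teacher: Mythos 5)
Your proposal is correct and follows essentially the same route as the paper: the paper differentiates with respect to $\mathbf{w}_n$ rather than $\hat{y}_n$ (so its gradient is your bracket times $\hat{p}_n\mathbf{f}$), arrives at the identical expression $\left[(\alpha-\beta)\log\hat{p}_n-\alpha\log\tilde{p}_n-\mathcal{L}\right]\hat{p}_n$, and solves the vanishing bracket for $\tilde{p}_n$. One small correction: the hypothesis that $\hat{p}_n$ is the largest entry is not merely there to single out the dominant coordinate --- it is what guarantees $\hat{p}_n\geq 1/N$, hence $\hat{p}_n\not\to 0$, so that the product tending to zero forces the bracket (and not just the prefactor) to tend to zero; for non-maximal coordinates this cancellation is not justified in the limit, so your remark that the stationarity equation ``holds for every coordinate'' overstates what the argument delivers.
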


\begin{proof}
	First, the loss function can be rewritten as 
	\begin{align}
	\mathcal{L}
	&=(\alpha-\beta)\sum_{j=1}^{N}\sigma\left(\mathbf{W}^\mathsf{T}\mathbf{f}\right)_j\log\left(\sigma\left(\mathbf{W}^\mathsf{T}\mathbf{f}\right)_j\right) \notag \\
	&\quad
	-\alpha\sum_{j=1}^{N}\sigma\left(\mathbf{W}^\mathsf{T}\mathbf{f}\right)_j\log(\tilde{p}_j)\,.
	\end{align}
	It is easy to see
	\begin{align}
	\frac{\partial\sigma\left(\mathbf{W}^\mathsf{T}\mathbf{f}\right)_j}{\partial\mathbf{w}_n}
	&=\mathbb{I}(j=n)\sigma\left(\mathbf{W}^\mathsf{T}\mathbf{f}\right)_j\mathbf{f} \notag \\
	&\quad
	-\sigma\left(\mathbf{W}^\mathsf{T}\mathbf{f}\right)_j\sigma\left(\mathbf{W}^\mathsf{T}\mathbf{f}\right)_n\mathbf{f}\,,
	\end{align}
	in where $\mathbb{I}(\cdot)$ is the indicator function.
	Now we can compute the gradient of $\mathcal{L}$ with respect to $\mathbf{w}_n$:
	\begin{align}
	\frac{\partial\mathcal{L}}{\partial\mathbf{w}_n}
	&=(\alpha-\beta)
	\sum_{j=1}^{N}
	\left[
	\frac{\partial\sigma\left(\hat{\mathbf{y}}\right)_j}{\partial\mathbf{w}_n}\log\left(\sigma\left(\hat{\mathbf{y}}\right)_j\right) \right. \notag \\
	&\quad\left.
	+\sigma\left(\hat{\mathbf{y}}\right)_j\frac{\partial\log\left(\sigma\left(\hat{\mathbf{y}}\right)_j\right)}{\partial\mathbf{w}_n} \right]
	\notag \\
	&\quad
	-\alpha\sum_{j=1}^{N}\frac{\partial\sigma\left(\hat{\mathbf{y}}\right)_j}{\partial\mathbf{w}_n}\log(\tilde{p}_j)\\
	&=
	(\alpha-\beta)\sum_{j=1}^{N}
	\left[
	\mathbb{I}(j=n)
	-\sigma\left(\hat{\mathbf{y}}\right)_n
	\right]
	\sigma\left(\hat{\mathbf{y}}\right)_j\mathbf{f}
	\log\left(\sigma\left(\hat{\mathbf{y}}\right)_j\right) \notag\\
	&\quad 
	+(\alpha-\beta)\sum_{j=1}^{N}
	\sigma\left(\hat{\mathbf{y}}\right)_j
	\left(
	\mathbb{I}(j=n)\mathbf{f}
	-\sigma\left(\hat{\mathbf{y}}\right)_n\mathbf{f}
	\right)\notag\\
	&\quad
	-\alpha\sum_{j=1}^{N}
	\left[
	\mathbb{I}(j=n)\sigma\left(\hat{\mathbf{y}}\right)_j\mathbf{f}
	-\sigma\left(\hat{\mathbf{y}}\right)_j\sigma\left(\hat{\mathbf{y}}\right)_n\mathbf{f}
	\right]
	\log(\tilde{p}_j)\\
	&=(\alpha-\beta)\sigma\left(\hat{\mathbf{y}}\right)_n\log\left(\sigma\left(\hat{\mathbf{y}}\right)_n\right)\mathbf{f}\notag\\
	&\quad 
	-(\alpha-\beta)\sum_{j=1}^{N}
	\sigma\left(\hat{\mathbf{y}}\right)_j\log\left(\sigma\left(\hat{\mathbf{y}}\right)_j\right)\sigma\left(\hat{\mathbf{y}}\right)_n\mathbf{f}\notag\\
	&\quad 
	+(\alpha-\beta)\sigma\left(\hat{\mathbf{y}}\right)_n\mathbf{f}
	-(\alpha-\beta)\sigma\left(\hat{\mathbf{y}}\right)_n\mathbf{f}\sum_{j=1}^{N}\sigma\left(\hat{\mathbf{y}}\right)_j\notag\\
	&\quad 
	-\alpha\sigma\left(\hat{\mathbf{y}}\right)_n\log(\tilde{p}_n)\mathbf{f}
	+\alpha\sum_{j=1}^{N}\sigma\left(\hat{\mathbf{y}}\right)_j\log(\tilde{p}_j)\sigma\left(\hat{\mathbf{y}}\right)_n\mathbf{f}\\
	&=(\alpha-\beta)\sigma\left(\hat{\mathbf{y}}\right)_n\log\left(\sigma\left(\hat{\mathbf{y}}\right)_n\right)\mathbf{f}
	-\alpha\sigma\left(\hat{\mathbf{y}}\right)_n\log(\tilde{p}_n)\mathbf{f}\notag\\
	&\quad
	-\sigma\left(\hat{\mathbf{y}}\right)_n\mathbf{f}
	\left[
	(\alpha-\beta)\sum_{j=1}^{N}
	\sigma\left(\hat{\mathbf{y}}\right)_j\log\left(\sigma\left(\hat{\mathbf{y}}\right)_j\right)
	\right. \notag \\
	&\quad \left.
	-\alpha\sum_{j=1}^{N}\sigma\left(\hat{\mathbf{y}}\right)_j\log(\tilde{p}_j)
	\right]\notag\\
	&\quad
	+(\alpha-\beta)\sigma\left(\hat{\mathbf{y}}\right)_n\mathbf{f}
	-(\alpha-\beta)\sigma\left(\hat{\mathbf{y}}\right)_n\mathbf{f}\\
	&=
	\left[(\alpha-\beta)\log\left(\sigma\left(\hat{\mathbf{y}}\right)_n\right)
	-\alpha\log(\tilde{p}_n)
	\right]\sigma\left(\hat{\mathbf{y}}\right)_n\mathbf{f} \notag \\
	&\quad
	-\mathcal{L}\sigma\left(\hat{\mathbf{y}}\right)_n\mathbf{f}\\
	&=
	\left[(\alpha-\beta)\log\left(\sigma\left(\hat{\mathbf{y}}\right)_n\right)
	-\alpha\log(\tilde{p}_n)
	-\mathcal{L}
	\right]\sigma\left(\hat{\mathbf{y}}\right)_n\mathbf{f}\\
	&=
	\left[(\alpha-\beta)\log\left(\hat{p}_n\right)
	-\alpha\log(\tilde{p}_n)
	-\mathcal{L}
	\right]\hat{p}_n\mathbf{f}\,.
	\end{align}
	
	During training, we expect the optimization algorithm can converge and finally $\frac{\partial\mathcal{L}}{\partial\mathbf{w}_n}\rightarrow \mathbf{0}$. Because $\mathbf{f}$ will not be $\mathbf{0}$, we conclude that $\left[(\alpha-\beta)\log\left(\hat{p}_n\right)
	-\alpha\log(\tilde{p}_n)
	-\mathcal{L}
	\right]\hat{p}_n\rightarrow 0$. Because $\sum_{i=1}^{N}\hat{p}_i=1$, consider the fact that $\hat{p}_n$ is the largest value in $\{\hat{p}_1, \hat{p}_1, \dots, \hat{p}_N\}$, then $\hat{p}_n\not\rightarrow 0$ at the end of training. So we have $\left[(\alpha-\beta)\log\left(\hat{p}_n\right)
	-\alpha\log(\tilde{p}_n)
	-\mathcal{L}
	\right]\rightarrow 0$, which easily translates to  $\tilde{p}_n\rightarrow\exp(-\frac{\mathcal{L}}{\alpha})\left(\hat{p}_n\right)^{1-\frac{\beta}{\alpha}}$.
	\qed
\end{proof}

\begin{figure}
	\centering	
	\includegraphics[width=0.7\linewidth]{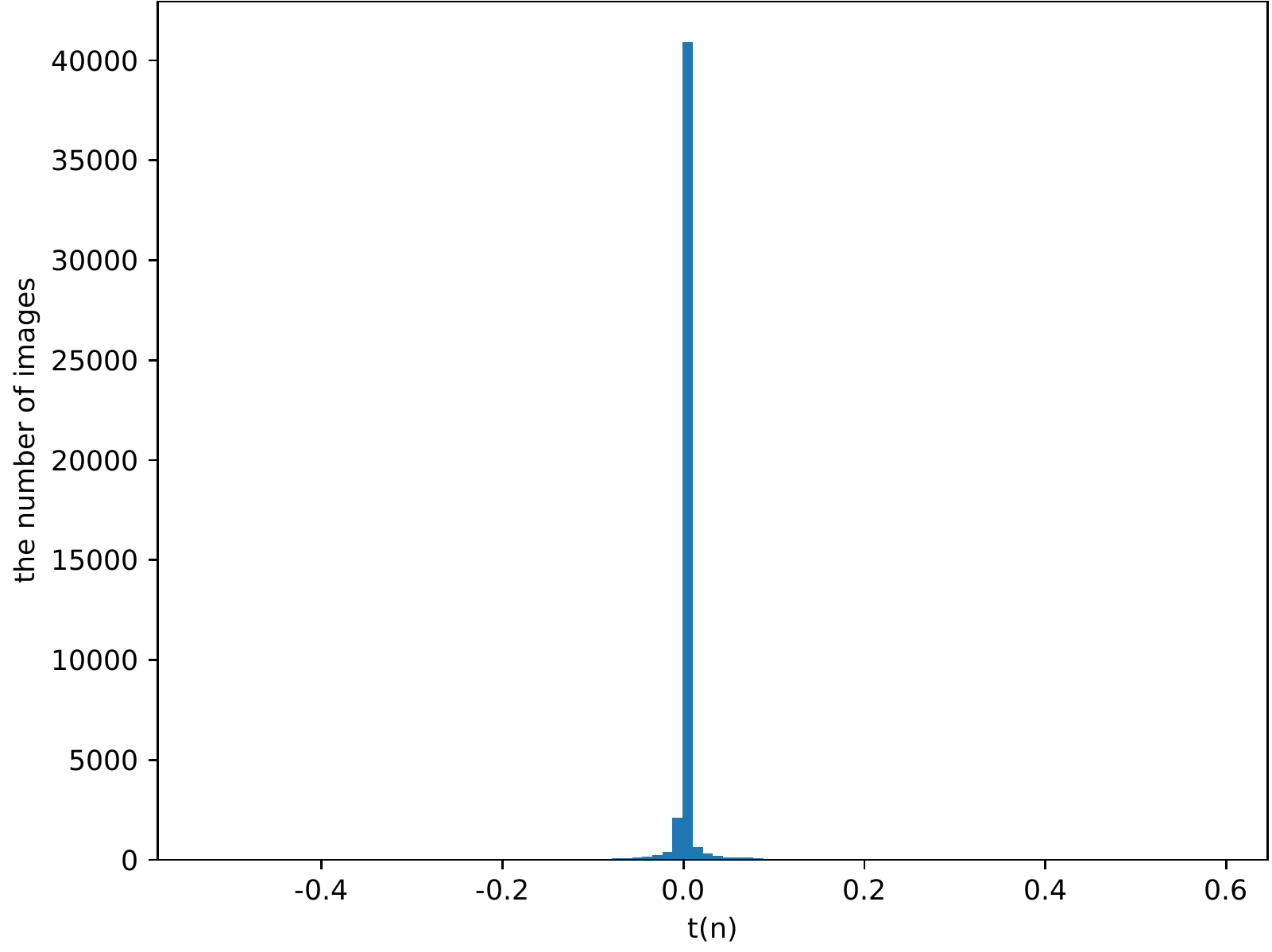}
	\caption{Distribution of $t(n)$ on the whole CIFAR-10 dataset at the end of the D2 training. $t(n)$ is defined as $(\alpha-\beta)\log\left(\hat{p}_n\right)-\alpha\log(\tilde{p}_n)-\mathcal{L}$. We can see $t(n)=0$ for almost all images, where $n$ is calculated according to each image.}
	\label{fig:t(n)}
\end{figure}

We would like to show experimental results for verifying Theorem~\ref{thm:1}. Let $t(n)$ denote $(\alpha-\beta)\log\left(\hat{p}_n\right)-\alpha\log(\tilde{p}_n)-\mathcal{L}$. Now, consider a single sample, suppose $\hat{\mathbf{p}}$ will get the largest value at $n$ where $n\in\{1,2,\dots,N\}$. Then it is expected that $\hat{p}_n\rightarrow 1$ and $t(n)\rightarrow 0$ at the end of training. Figure~\ref{fig:t(n)} shows the distribution of $t(n)$ on the whole CIFAR-10 dataset, where $n$ is calculated according to different samples. The distribution is almost gathered around $0$. 
So we also observed empirically that $\tilde{p}_n\rightarrow\exp(-\frac{\mathcal{L}}{\alpha})\left(\hat{p}_n\right)^{1-\frac{\beta}{\alpha}}$, where $n$ is the class predicted by the network.

Theorem~\ref{thm:1} tells us
$\tilde{p}_n$ converges to $\exp(-\frac{\mathcal{L}}{\alpha})\left(\hat{p}_n\right)^{1-\frac{\beta}{\alpha}}$ during the optimization. And at last, we expect that $\tilde{p}_n=\exp(-\frac{\mathcal{L}}{\alpha})\left(\hat{p}_n\right)^{1-\frac{\beta}{\alpha}}$, in which $n$ is the class predicted by the network. 
%Appendix~\ref{apx:exp:thm:1} shows empirical validation results for this theorem and it holds very well in practice.
In other words, we have deciphered that there is an exponential link between pseudo-labels and predictions. From $\tilde{p}_n\rightarrow\exp(-\frac{\mathcal{L}}{\alpha})\left(\hat{p}_n\right)^{1-\frac{\beta}{\alpha}}$, we notice that $\tilde{p}_n$ is approximately proportional to $\hat{p}_n^{1-\frac{\beta}{\alpha}}$. That gives a theoretical support to use network predictions as pseudo-labels. And, it is required that $1-\frac{\beta}{\alpha}>0$ to make pseudo-labels and network predictions consistent. We must set $\alpha>\beta$. In our experiments, if we set $\alpha < \beta$, the training will indeed fail miserably.

Next, we analyze how $\tilde{\mathbf{y}}$ is updated in D2. With the loss function $\mathcal{L}$, the gradients of $\mathcal{L}$ with respect to $\tilde{y}_n$ is
\begin{align}
\frac{\partial\mathcal{L}}{\partial\tilde{y}_n}
&=\sum_{k=1}^{N}
\frac{\partial\mathcal{L}}{\partial\tilde{p}_k}
\frac{\partial\tilde{p}_k}{\partial\tilde{y}_n}
\\
&=-\alpha\sum_{k=1}^{N}
\frac{\sigma\left(\hat{\mathbf{y}}\right)_k}{\tilde{p}_k}
\left(
\mathbb{I}(k=n)\sigma\left(\tilde{\mathbf{y}}\right)_k
-\sigma\left(\tilde{\mathbf{y}}\right)_k\sigma\left(\tilde{\mathbf{y}}\right)_n
\right)
\\
&=-\alpha\sum_{k=1}^{N}
\frac{\sigma\left(\hat{\mathbf{y}}\right)_k}{\sigma\left(\tilde{\mathbf{y}}\right)_k}
\left(
\mathbb{I}(k=n)\sigma\left(\tilde{\mathbf{y}}\right)_k
-\sigma\left(\tilde{\mathbf{y}}\right)_k\sigma\left(\tilde{\mathbf{y}}\right)_n
\right)
\\
&=-\alpha\sum_{k=1}^{N}
\sigma\left(\hat{\mathbf{y}}\right)_k
\left(
\mathbb{I}(k=n)
-\sigma\left(\tilde{\mathbf{y}}\right)_n
\right)
\\
&=-\alpha\sum_{k=1}^{N}
\sigma\left(\hat{\mathbf{y}}\right)_k
\mathbb{I}(k=n)
+\alpha\sum_{k=1}^{N}
\sigma\left(\hat{\mathbf{y}}\right)_k
\sigma\left(\tilde{\mathbf{y}}\right)_n
\\
&=-\alpha
\sigma\left(\hat{\mathbf{y}}\right)_n
+\alpha\sigma\left(\tilde{\mathbf{y}}\right)_n\sum_{k=1}^{N}
\sigma\left(\hat{\mathbf{y}}\right)_k
\\
&=-\alpha
\sigma\left(\hat{\mathbf{y}}\right)_n
+\alpha\sigma\left(\tilde{\mathbf{y}}\right)_n
\,.
\end{align}

\begin{figure}
	\centering 
	\subfloat[]{\label{fig:sub1}\includegraphics[width=0.45\linewidth]{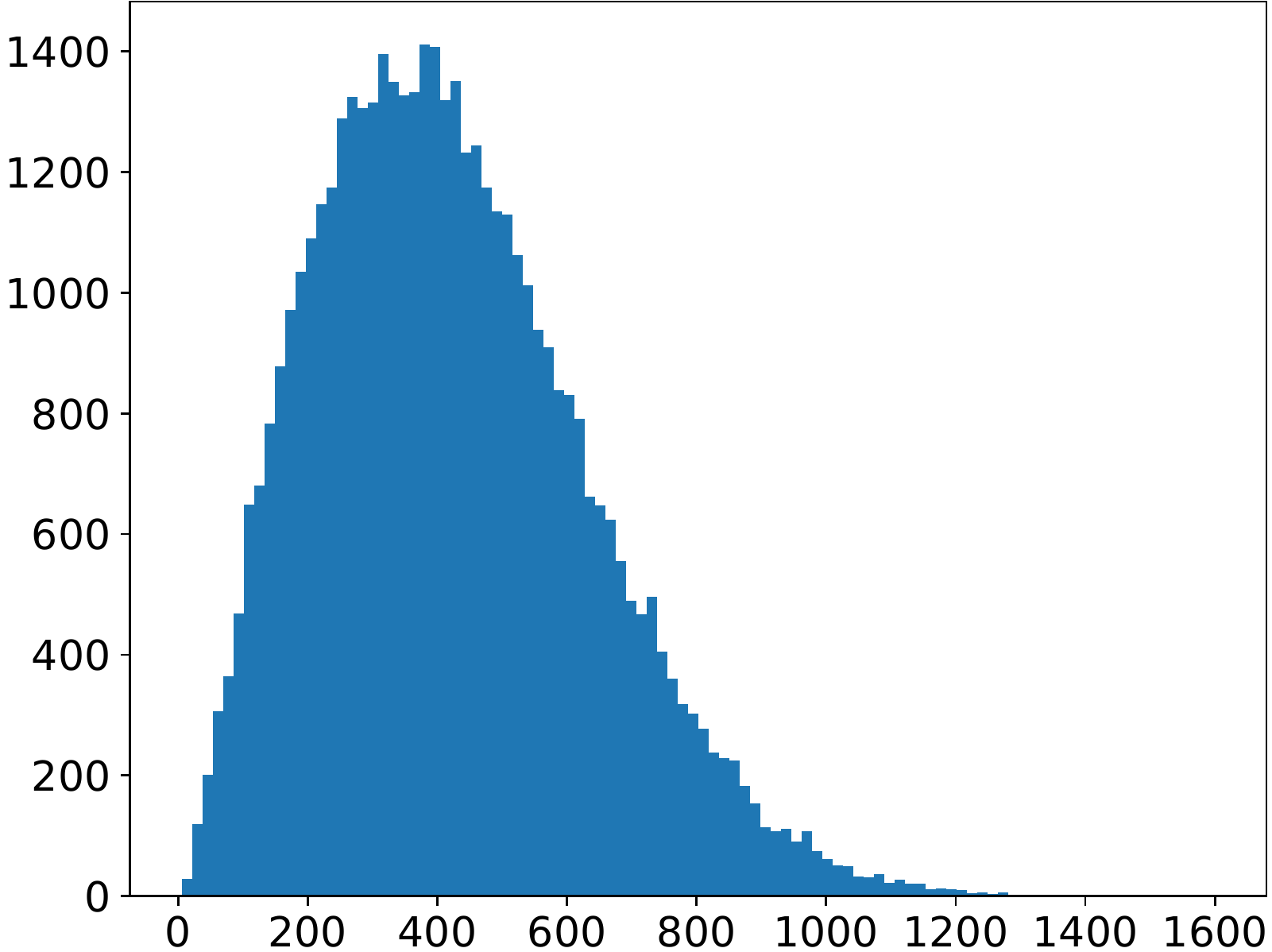}}\quad 
	\subfloat[]{\label{fig:sub2}\includegraphics[width=0.45\linewidth]{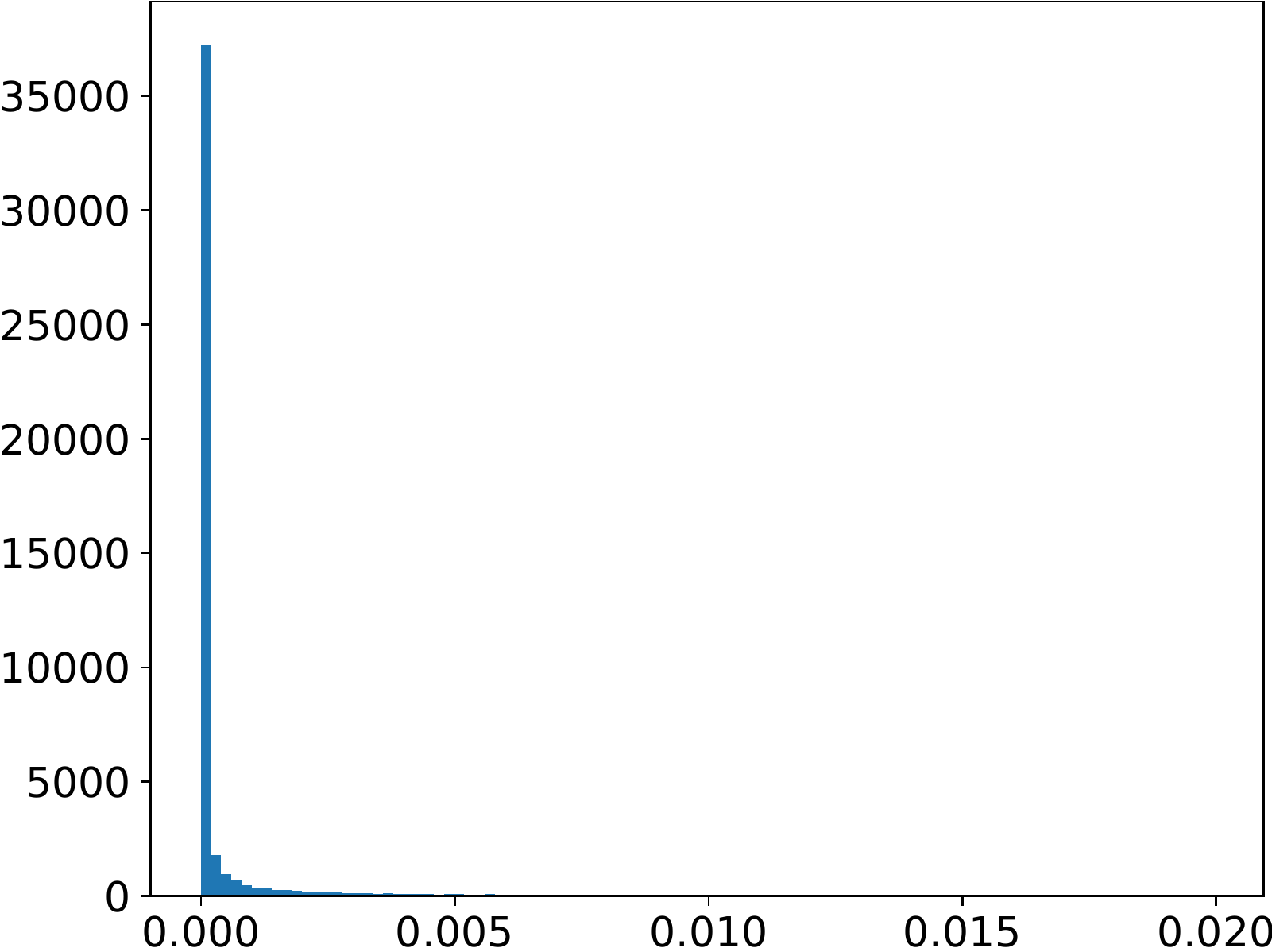}}\\ 
	\caption{\protect\subref{fig:sub1} shows the distribution of the number of images versus $\|\tilde{\mathbf{y}}\|_2$ in CIFAR-10. \protect\subref{fig:sub2} shows the distribution of the number of images versus $\|\frac{\partial\mathcal{L}}{\partial\tilde{\mathbf{y}}}\|_2$. Note that the ranges of x-axis are \emph{different} between \protect\subref{fig:sub1} and \protect\subref{fig:sub2}. From the figure, we can see the magnitude of  $\frac{\partial\mathcal{L}}{\partial\tilde{\mathbf{y}}}=-\alpha\sigma\left(\hat{\mathbf{y}}\right)+\alpha\sigma\left(\tilde{\mathbf{y}}\right)$ is far less than that of $\tilde{\mathbf{y}}$. So we use one more hyperparameter $\lambda$ rather than the overall learning rate to update the pseudo logit $\tilde{\mathbf{y}}$.}
	\label{fig:stage2-update} 
\end{figure}

By gradient descent, the pseudo logit $\tilde{\mathbf{y}}$ is updated by
\begin{equation}
\label{updating-formula}
\tilde{\mathbf{y}}\leftarrow \tilde{\mathbf{y}}-\lambda\frac{\partial\mathcal{L}}{\partial\tilde{\mathbf{y}}}
=\tilde{\mathbf{y}}
-\lambda\alpha\sigma\left(\tilde{\mathbf{y}}\right)
+\lambda\alpha\sigma\left(\hat{\mathbf{y}}\right)\,,
\end{equation}
where $\lambda$ is the learning rate for updating $\tilde{\mathbf{y}}$.
The reason we use one more hyperparameter $\lambda$ rather than the overall learning rate is because the magnitude of  $\frac{\partial\mathcal{L}}{\partial\tilde{\mathbf{y}}}=-\alpha\sigma\left(\hat{\mathbf{y}}\right)+\alpha\sigma\left(\tilde{\mathbf{y}}\right)$ is much smaller than that of $\tilde{\mathbf{y}}$ (in part due to the sigmoid transform) and the overall learning rate is too small to update the pseudo logit (cf. Figure~\ref{fig:stage2-update}).
We set $\lambda=4000$ in all our experiments.

The updating formulas in many previous works can be considered as special cases of that of D2. 
In Temporal Ensembling~\citep{Temporal_Ensembling}, the pseudo-labels $\tilde{\mathbf{p}}$ is a moving average of the network predictions $\hat{\mathbf{p}}$ during training. The updating formula is $\mathbf{P}\leftarrow\alpha\mathbf{P}+(1-\alpha)\hat{\mathbf{p}}$. To correct for the startup bias, the $\tilde{\mathbf{p}}$ needs to be divided by the factor $(1-\alpha^t)$, where $t$ is the number of epochs. So the updating formula of $\tilde{\mathbf{p}}$ is $\tilde{\mathbf{p}}\leftarrow \mathbf{P} / (1-\alpha^t)$. In Mean Teacher~\citep{Mean_teacher}, the $\tilde{\mathbf{p}}$ is the prediction of a teacher model which uses the exponential moving average weights of the student model. \citet{Noisy_Labels} proposed using the running average of the network predictions to estimate the groundtruth of the noisy label. However, their updating formula were designed manually and ad-hoc. In contrast, we treat pseudo-labels as updatable variables like the network parameters. These variables are learned by minimizing a well-defined loss function (cf. equation \ref{loss-function}). From a probabilistic perspective, it is well known that minimizing the KL loss is equivalent to maximum likelihood estimation, in which the backbone network's architecture defines the estimation's functional space while SGD optimizes over these variables (both the network parameters and the pseudo-labels). We do not need to manually specify how the pseudo-labels are generated. This process is natural and principled.

\subsection{An illustrative example}
\begin{figure*}[t] 
	\centering 
	\subfloat[]{\label{fig:mnist:sub1}\includegraphics[width=0.25\linewidth]{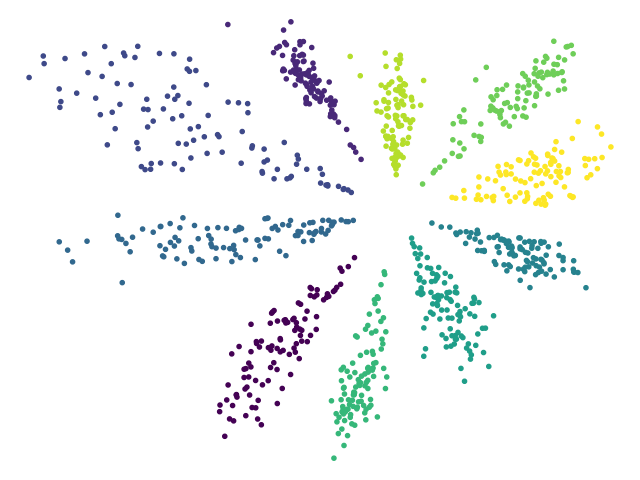}}
	\subfloat[]{\label{fig:mnist:sub2}\includegraphics[width=0.25\linewidth]{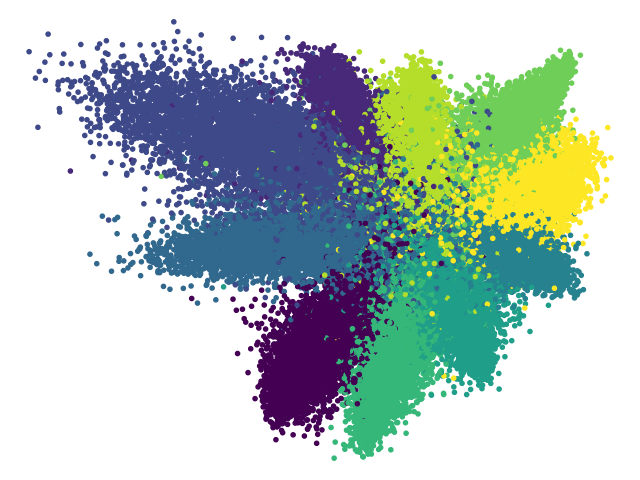}} 
	\subfloat[]{\label{fig:mnist:sub3}\includegraphics[width=0.25\linewidth]{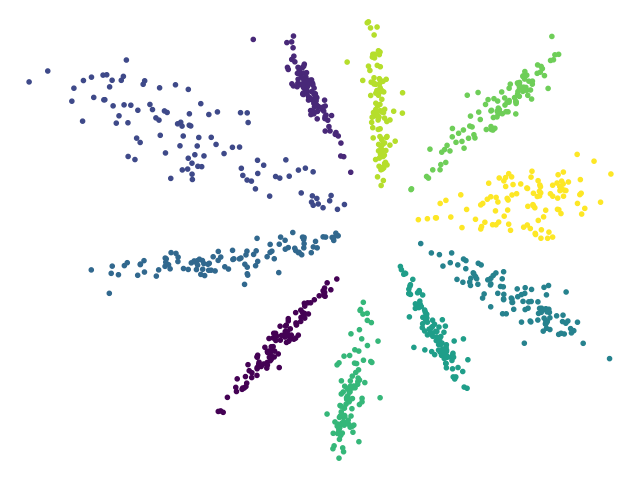}}
	\subfloat[]{\label{fig:mnist:sub4}\includegraphics[width=0.25\linewidth]{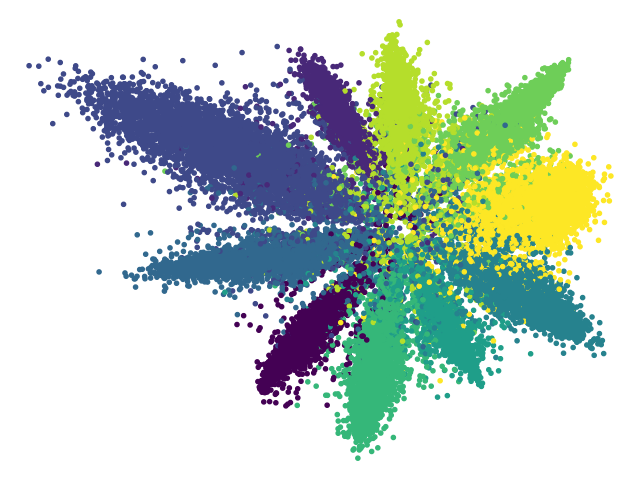}}
	\caption{Feature distribution on MNIST. First, LeNet was trained by labeled data. \protect\subref{fig:mnist:sub1} shows the the feature distribution of labeled images. Points with the same color belong to the same class. \protect\subref{fig:mnist:sub2} shows the feature distribution of both labeled and unlabeled images. Then, we used LeNet as the backbone network and trained the D2 framework. After training, \protect\subref{fig:mnist:sub3} and \protect\subref{fig:mnist:sub4} show the feature distribution of labeled images and all images, respectively. This figure needs to be viewed in color.}
	\label{fig:mnist}
\end{figure*}

Now, we use a toy example to explain how the D2 framework works intuitively. Inspired by \citet{TCP}, we use the LeNet~\citep{LeNet} as backbone structure and add two FC layers, in which the first FC layer learns a 2-D feature and the second FC layer projects the feature onto the class space. The network was trained on MNIST. Note that MNIST has 50000 images for training. We only used 1000 images as labeled images to train the network. Figure~\ref{fig:mnist:sub1} depicts the 2-D feature distribution of these 1000 images. We observe that features belonging to the same class will cluster together. Figure~\ref{fig:mnist:sub2} shows the feature distribution of both these 1000 labeled and other 49000 unlabeled images. Although the network did not train on the unlabeled images, features belonging to the same class are still roughly clustered.

Pseudo-labels in our D2 framework are probability distributions and initialized by network predictions. As Figure~\ref{fig:mnist:sub2} shows, features near the cluster center will have confident pseudo-labels and can be learned safely. However, features at the boundaries between clusters will have a pseudo-label whose corresponding distribution among different classes is flat rather than sharp. By training D2, the network will learn confident pseudo-labels first. Then it is expected that uncertain pseudo-labels will become more and more precise and confident by optimization. At last, each cluster will become more compact and the boundaries between different classes' features will become clear.  Figure~\ref{fig:mnist:sub4} depicts the feature distribution of all images after D2 training. Because the same class features of unlabeled images get closer, the same class features of labeled images will also get closer (cf. Figure~\ref{fig:mnist:sub3}). That is how unlabeled images help the training in our D2 framework.

\subsection{Repetitive reprediction (R2)}

Although D2 has worked well in practice (cf. Table~\ref{tab:ablation-result-table} column a), there are still some shortcomings in it. We will discuss two major ones. To mitigate these problems and further boost the performance, we propose a simple but effective strategy, repetitive reprediction (R2), to improve the D2 framework.

\begin{figure}
	\centering	
	\includegraphics[width=0.9\linewidth]{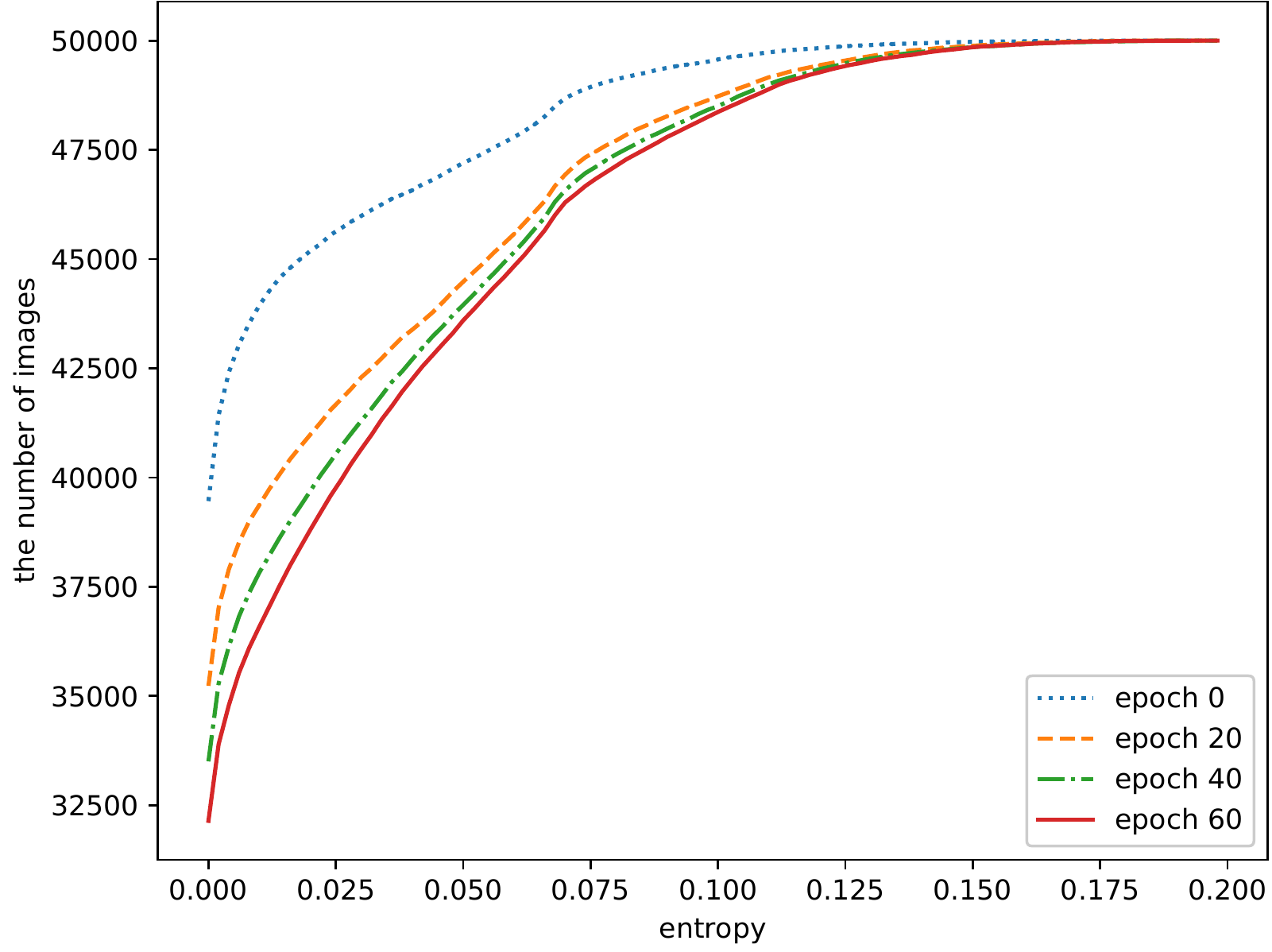}
	\caption{Cumulative distribution of the number of pseudo-labels versus the entropy. For each point $(n, e)$ on the line, it means there are $n$ images whose label entropies are less than $e$. Pseudo-labels will become flat as the D2 framework is trained more epochs. This figure is best viewed in color. }
	\label{fig:stage2-num_entropy}
\end{figure}

First, we expect pseudo-labels can become more confident along with D2's learning process. Unfortunately, we observed that more and more pseudo-labels become flat during training (cf. Figure~\ref{fig:stage2-num_entropy}). Below, we prove Theorem~\ref{thm:2} to explain why this adverse effect happens. %The experiments result can be found in Appendix~\ref{apx:exp:thm:2}.
\begin{theorem}
	\label{thm:2}
	Suppose D2 is trained by SGD with the loss function $\mathcal{L}=\alpha\mathcal{L}_c + \beta\mathcal{L}_e$. If $\tilde{p}_n=\exp(-\frac{\mathcal{L}}{\alpha})\left(\hat{p}_n\right)^{1-\frac{\beta}{\alpha}}$, we must have $\tilde{p}_n\leq \hat{p}_n$. 
\end{theorem}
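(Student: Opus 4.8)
The plan is to reduce the claimed inequality $\tilde p_n\le\hat p_n$ to a lower bound on the loss $\mathcal{L}$, and then obtain that bound by lower-bounding the two pieces $\mathcal{L}_c$ and $\mathcal{L}_e$ separately. First I would substitute the hypothesis $\tilde p_n=\exp(-\frac{\mathcal{L}}{\alpha})(\hat p_n)^{1-\frac{\beta}{\alpha}}$ into the target inequality and take logarithms; this is legitimate because $\hat p_n>0$ (it is the largest coordinate of a probability vector, so $\hat p_n\ge\frac1N>0$) and $\tilde p_n>0$. After clearing the positive factor $\alpha$, the inequality $\tilde p_n\le\hat p_n$ becomes equivalent to $\mathcal{L}\ge-\beta\log\hat p_n=\beta(-\log\hat p_n)$, and note $-\log\hat p_n\ge 0$ since $\hat p_n\le 1$.

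Next I would bound the two loss terms. Since $\mathcal{L}_c=KL(\hat{\mathbf p}\,\|\,\tilde{\mathbf p})\ge 0$ by non-negativity of the KL divergence, and $\alpha>0$, it suffices to show $\beta\mathcal{L}_e\ge\beta(-\log\hat p_n)$, i.e.\ $\mathcal{L}_e\ge-\log\hat p_n$. Here is where I use that $\hat p_n$ is the \emph{largest} entry of $\hat{\mathbf p}$: for every $j$ we have $-\log\hat p_j\ge-\log\hat p_n$, hence $\mathcal{L}_e=\sum_{j=1}^N\hat p_j(-\log\hat p_j)\ge(-\log\hat p_n)\sum_{j=1}^N\hat p_j=-\log\hat p_n$, using $\sum_j\hat p_j=1$. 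Combining the two bounds, $\mathcal{L}=\alpha\mathcal{L}_c+\beta\mathcal{L}_e\ge\beta(-\log\hat p_n)$, which is exactly the inequality derived in the first step; undoing the logarithm yields $\tilde p_n\le\hat p_n$.

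There is no real obstacle here: the only thing to be careful about is the direction of the logarithmic inequalities and the sign of $\alpha$ when multiplying through (the hypothesis $\alpha>\beta>0$, already required by the discussion after Theorem~\ref{thm:1} so that $1-\frac{\beta}{\alpha}>0$, also guarantees these manipulations preserve the inequality). The one genuinely substantive observation is the estimate $\mathcal{L}_e\ge-\log\hat p_n$, which is simply the fact that the cross-entropy of $\hat{\mathbf p}$ against a point mass placed at its argmax lower-bounds the Shannon entropy of $\hat{\mathbf p}$; everything else is bookkeeping.
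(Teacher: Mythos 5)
Your proof is correct and is essentially the paper's own argument: both rest on the two bounds $\mathcal{L}_c=KL(\hat{\mathbf p}\,\|\,\tilde{\mathbf p})\ge 0$ and $\mathcal{L}_e\ge -\log\hat p_n$ (the latter from $\hat p_n$ being the largest coordinate), which give $\mathcal{L}\ge-\beta\log\hat p_n$ and hence $\tilde p_n\le\hat p_n$ via the exponential relation. The only difference is presentational -- you reduce the target inequality to the loss bound first and then prove it, while the paper derives the loss bound first and substitutes -- so no further comment is needed.
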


\begin{proof}
	First, according to the loss function we defined, we have 
	\begin{align}
	\mathcal{L}
	&=\alpha\sum_{j=1}^{N}\hat{p}_j\left[\log(\hat{p}_j)-\log(\tilde{p}_j)\right]-\beta\sum_{j=1}^{N}\hat{p}_j\log(\hat{p}_j)\\
	&\geq-\beta\sum_{j=1}^{N}\hat{p}_j\log(\hat{p}_j)\\
	&\geq-\beta\sum_{j=1}^{N}\hat{p}_j\log(\hat{p}_n)\\
	&=-\beta\log(\hat{p}_n)\,,
	\end{align}
	where $\hat{p}_n$ is the largest value in $\{\hat{p}_1,\hat{p}_2,\dots,\hat{p}_N\}$. Then, from $\tilde{p}_n=\exp\left(-\frac{\mathcal{L}}{\alpha}\right)\hat{p}_n^{1-\frac{\beta}{\alpha}}$ and $\mathcal{L}\geq -\beta\log(\hat{p}_n)$, we have
	\begin{equation}
	\tilde{p}_n
	=\exp\left(-\frac{\mathcal{L}}{\alpha}\right)\hat{p}_n^{1-\frac{\beta}{\alpha}}
	\leq\exp\left(\frac{\beta\log(\hat{p}_n)}{\alpha}\right)\hat{p}_n^{1-\frac{\beta}{\alpha}}
	=\hat{p}_n\,.
	\end{equation}
	\qed
\end{proof}

\begin{figure}
	\centering	
	\includegraphics[width=0.9\linewidth]{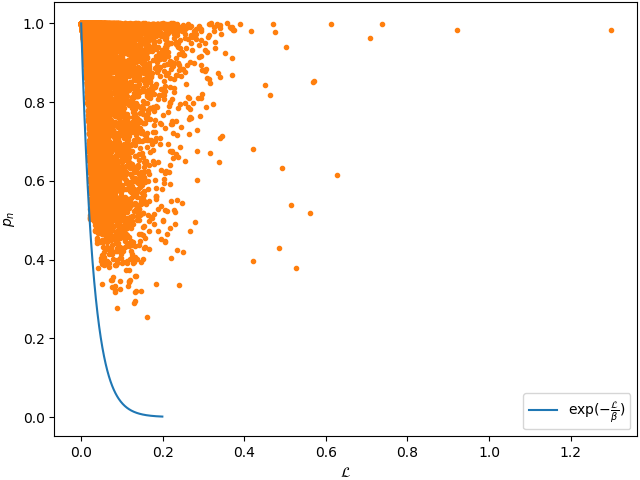}
	\caption{The curve of $\exp(-\frac{\mathcal{L}}{\beta})$ and $(\mathcal{L}, p_n)$ at the end of the D2 training on CIFAR-10. This figure is best viewed in color.}
	\label{fig:p-loss-data}
\end{figure}

We show that $\tilde{p}_n\leq \hat{p}_n$ holds in experiments. With $\tilde{p}_n=\exp(-\frac{\mathcal{L}}{\alpha})\hat{p}_n^{1-\frac{\beta}{\alpha}}$, if $\tilde{p}_n\leq \hat{p}_n$, that yields $\exp(-\frac{\mathcal{L}}{\alpha})\hat{p}_n^{1-\frac{\beta}{\alpha}}\leq \hat{p}_n$. Then we can get $\hat{p}_n\geq\exp(-\frac{\mathcal{L}}{\beta})$. 
Figure~\ref{fig:p-loss-data} shows $\hat{p}_n$ versus $\exp(-\frac{\mathcal{L}}{\beta})$, in which $\beta=0.03$.  
For a specific loss value, if $p_n$ is above the function curve, $\tilde{p}_n$ is smaller than $p_n$. Figure~\ref{fig:p-loss-data} shows the scatter plot of $(\mathcal{L}, \hat{p}_n)$ at the end of the D2 training on CIFAR-10.
Almost all points are above the curve. That means if $\tilde{p}_n\rightarrow\exp(-\frac{\mathcal{L}}{\alpha})\hat{p}_n^{1-\frac{\beta}{\alpha}}$, $\tilde{p}_n$ will be smaller than $\hat{p}_n$. 

From Theorem~\ref{thm:1}, we get $\tilde{p}_n\rightarrow\exp(-\frac{\mathcal{L}}{\alpha})\left(\hat{p}_n\right)^{1-\frac{\beta}{\alpha}}$,  where $\hat{\mathbf{p}}$ gets the largest value at $\hat{p}_n$. And Theorem~\ref{thm:2} tells us if $\tilde{p}_n=\exp(-\frac{\mathcal{L}}{\alpha})\left(\hat{p}_n\right)^{1-\frac{\beta}{\alpha}}$ then $\tilde{p}_n$ will be smaller than $\hat{p}_n$. Because $\tilde{\mathbf{p}}$ and $\hat{\mathbf{p}}$ are probability distributions, if $\tilde{\mathbf{p}}$ and $\hat{\mathbf{p}}$ get their largest value at $n$, $\tilde{\mathbf{p}}$ is more flat than $\hat{\mathbf{p}}$ when $\tilde{p}_n\leq \hat{p}_n$. That is, along with the training of D2, there is a tendency that pseudo-labels will be more flat than the network predictions.

Second, we find an unsolicited bias in the D2 framework. From the updating formula, we can get 
\begin{align}
\sum_{i=1}^{N}\tilde{y}_i
&\leftarrow
\sum_{i=1}^{N}\tilde{y}_i
-\lambda\alpha\sum_{i=1}^{N}\sigma\left(\tilde{\mathbf{y}}\right)_i
+\lambda\alpha\sum_{i=1}^{N}\sigma\left(\hat{\mathbf{y}}\right)_i  \\
&=\sum_{i=1}^{N}\tilde{y}_i-\lambda\alpha+\lambda\alpha  \\
&=\sum_{i=1}^{N}\tilde{y}_i\,.
\end{align}
That is, $\sum_{i=1}^{N}\tilde{y}_i$ will \emph{not} change after initialization. Although we define $\tilde{\mathbf{y}}$ as the variable which is unconstrained, the softmax function and SGD set an equality constraint for it. On the other hand, in practice, $\sum_{i=1}^{N}\hat{y}_i$ become more and more concentrated (cf. Figure~\ref{fig:stage2-y-sum}). Later, we will use an ablation study to demonstrate this bias is harmful.

\begin{figure}
	\centering 
	\subfloat[]{\label{fig:stage2-y-sum:sub1}\includegraphics[width=0.33\linewidth]{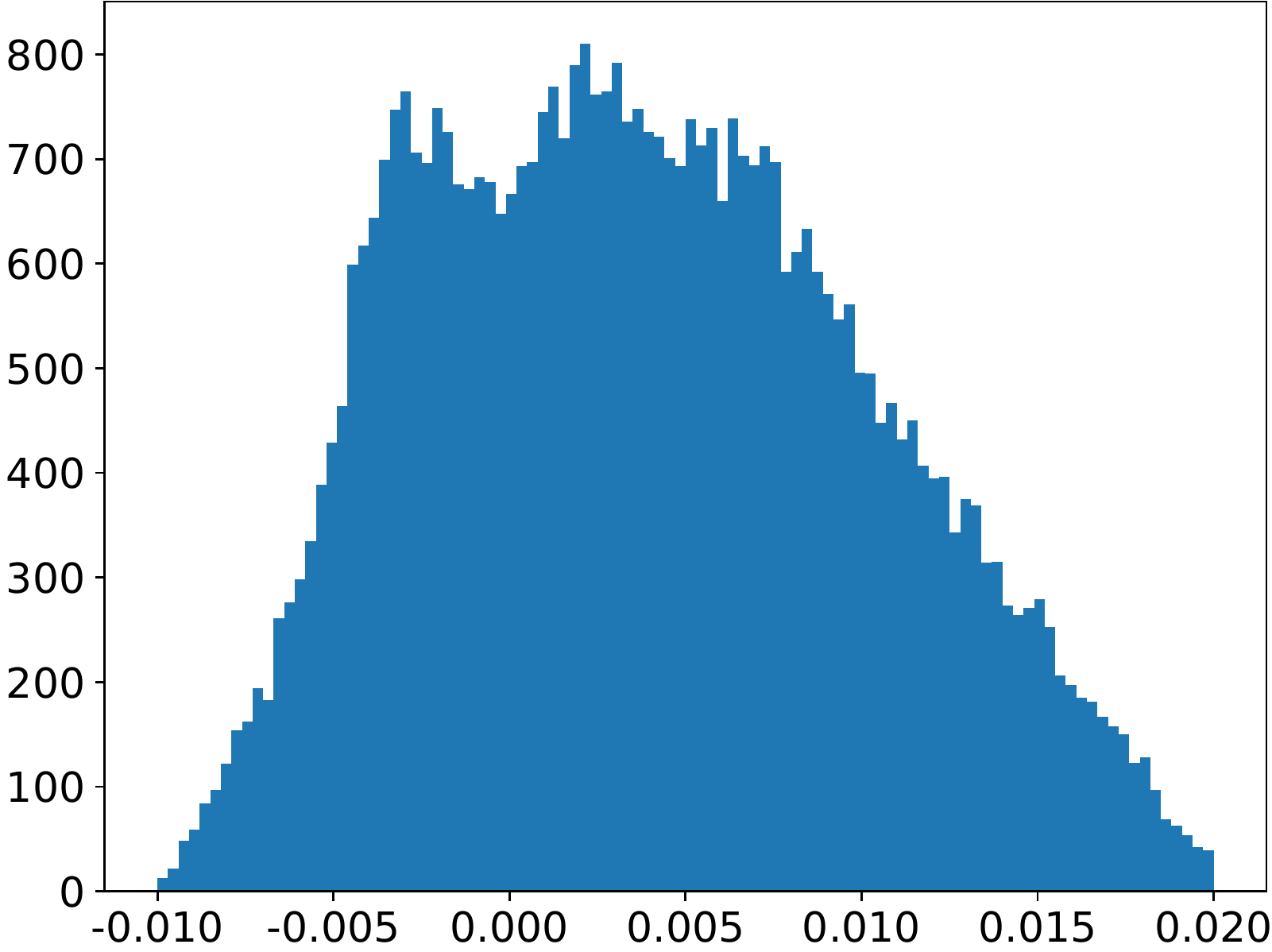}}
	\subfloat[]{\label{fig:stage2-y-sum:sub2}\includegraphics[width=0.33\linewidth]{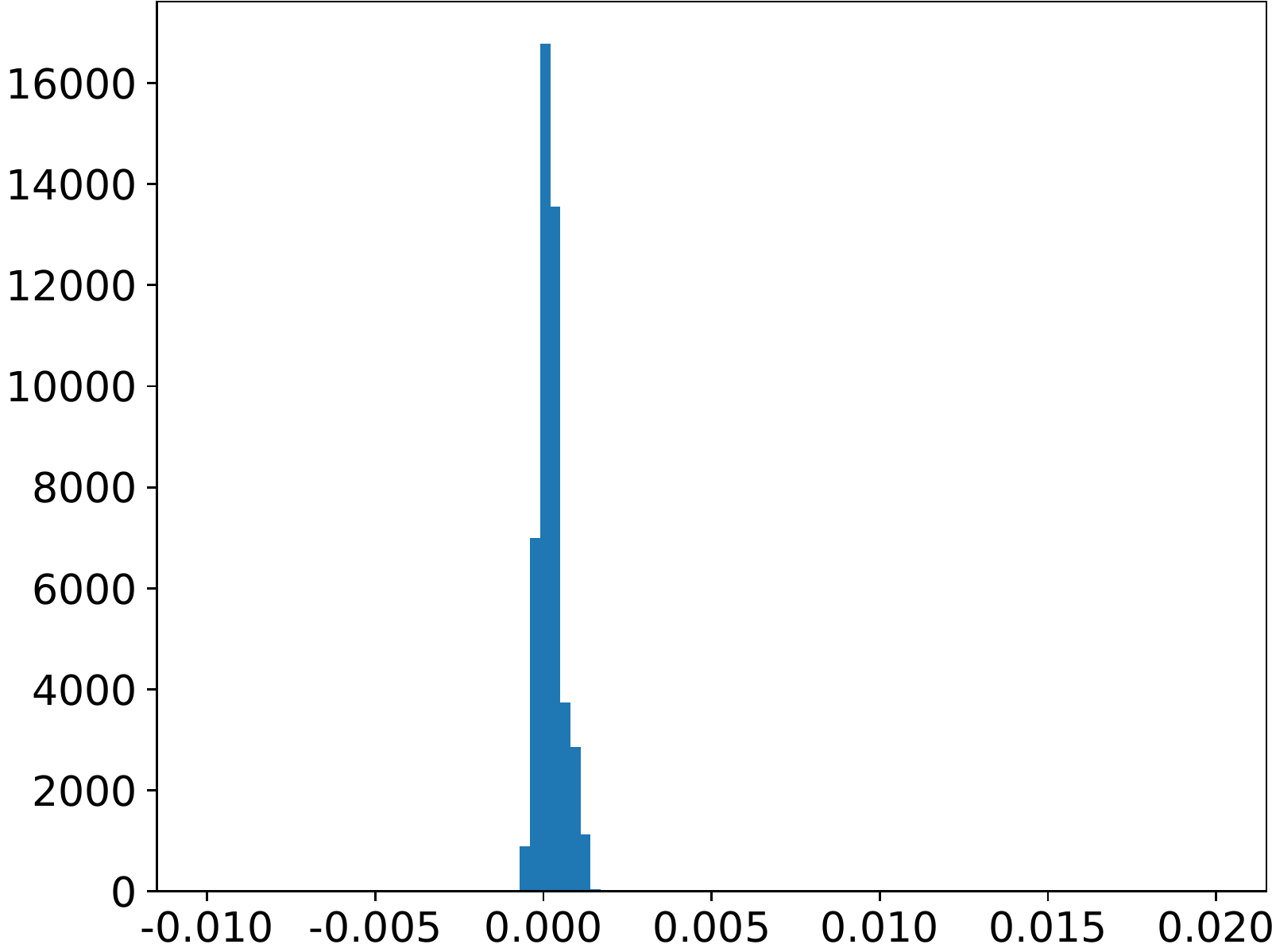}}
	\subfloat[]{\label{fig:stage2-y-sum:sub3}\includegraphics[width=0.33\linewidth]{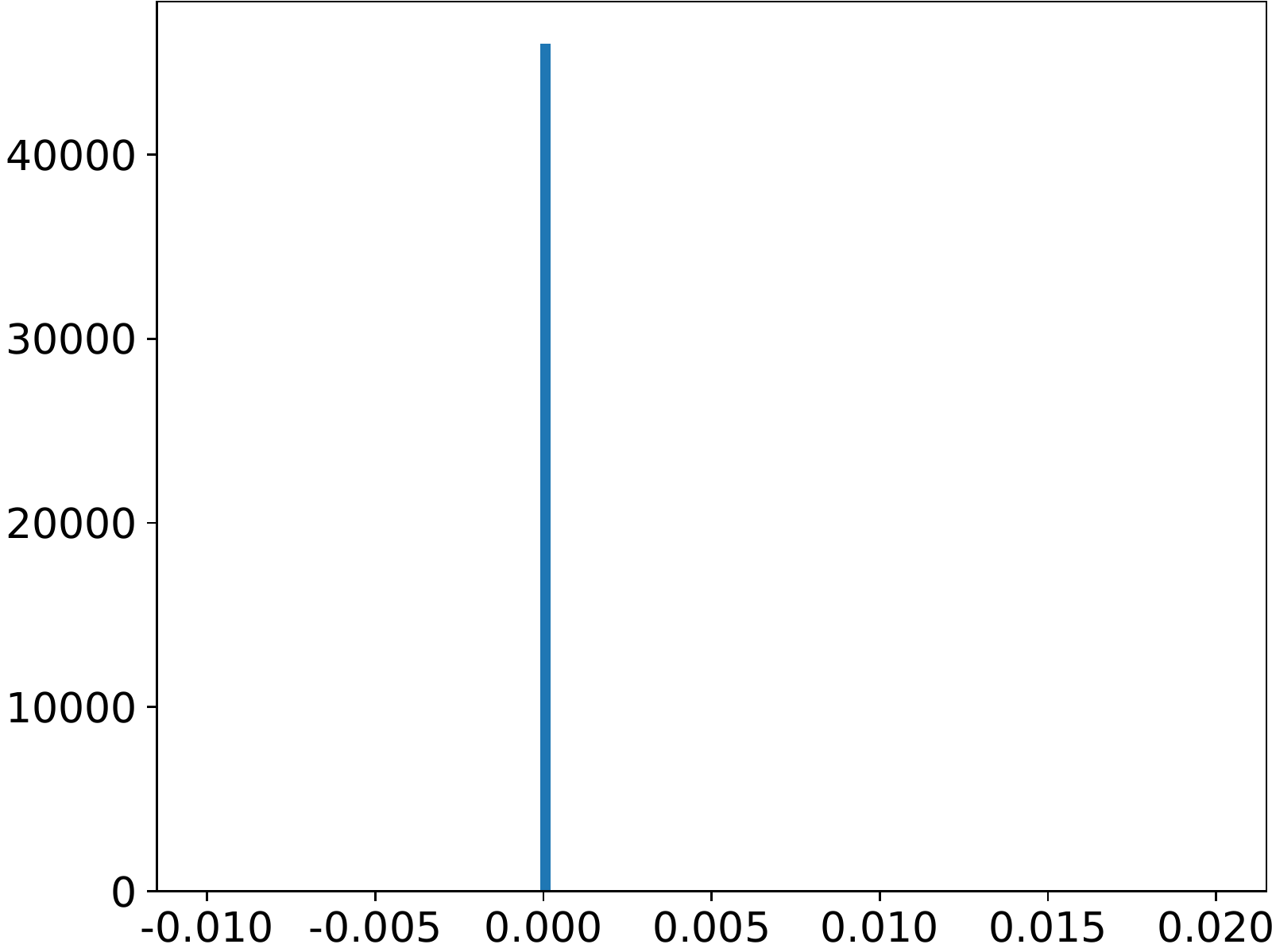}}\\ 
	\subfloat[]{\label{fig:stage2-y-sum:sub4}\includegraphics[width=0.33\linewidth]{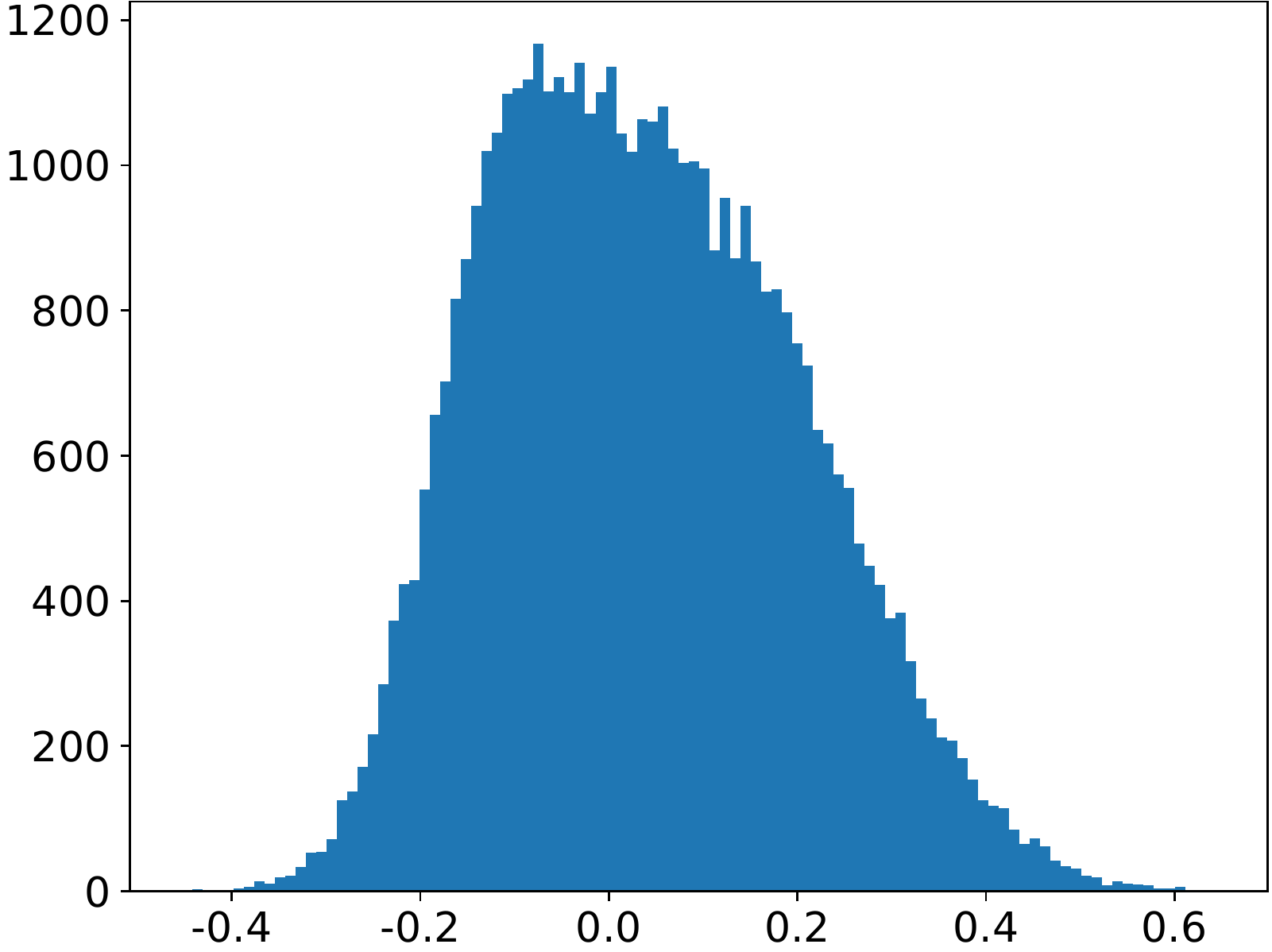}}
	\subfloat[]{\label{fig:stage2-y-sum:sub5}\includegraphics[width=0.33\linewidth]{fig_D2-pseudo.pdf}}
	\subfloat[]{\label{fig:stage2-y-sum:sub6}\includegraphics[width=0.33\linewidth]{fig_D2-pseudo.pdf}}\\ 
	\caption{During the D2 training, the distribution of $\sum_{i=1}^{N}\hat{y}_i$ on the unlabeled samples at 100, 200, 300 epoch on CIFAR-10 are showed by \protect\subref{fig:stage2-y-sum:sub1}, \protect\subref{fig:stage2-y-sum:sub2}, \protect\subref{fig:stage2-y-sum:sub3}, respectively. Note that $\sum_{i=1}^{N}\hat{y}_i$ will get more and more concentrated with training. The distributions of $\sum_{i=1}^{N}\tilde{y}_i$ on the unlabeled samples at 100, 200, 300 epoch on CIFAR-10 are showed by \protect\subref{fig:stage2-y-sum:sub4}, \protect\subref{fig:stage2-y-sum:sub5}, \protect\subref{fig:stage2-y-sum:sub6}, respectively. According to our analysis, $\sum_{i=1}^{N}\tilde{y}_i$ will not change. Experimental results are consistent with our theoretical analysis.}
	\label{fig:stage2-y-sum}
\end{figure}

We propose a repetitive reprediction (R2) strategy to overcome these difficulties, which repeatedly perform repredictions (i.e., using the prediction $\tilde{\mathbf{y}}$ to re-initialize the pseudo-labels $\tilde{\mathbf{y}}$ several times) during training D2. The benefits of R2 are two-fold. First, we want to make pseudo-labels confident. According to our analysis, the network predictions are sharper than pseudo-labels when the algorithm converges. So repredicting pseudo-labels can make them sharper.
Second, $\sum_{i=1}^{N}\tilde{y}_i$ will not change during D2 training. Reprediction can reduce the impact of this bias.
Furthermore, the validation accuracy often increase during training. A repeated reprediction can make pseudo-labels more accurate than that of the last reprediction. 

\begin{figure}[!htb] 
	\centering 
	\subfloat[]{\label{fig:stage2-loss:sub1}\includegraphics[width=0.3\linewidth]{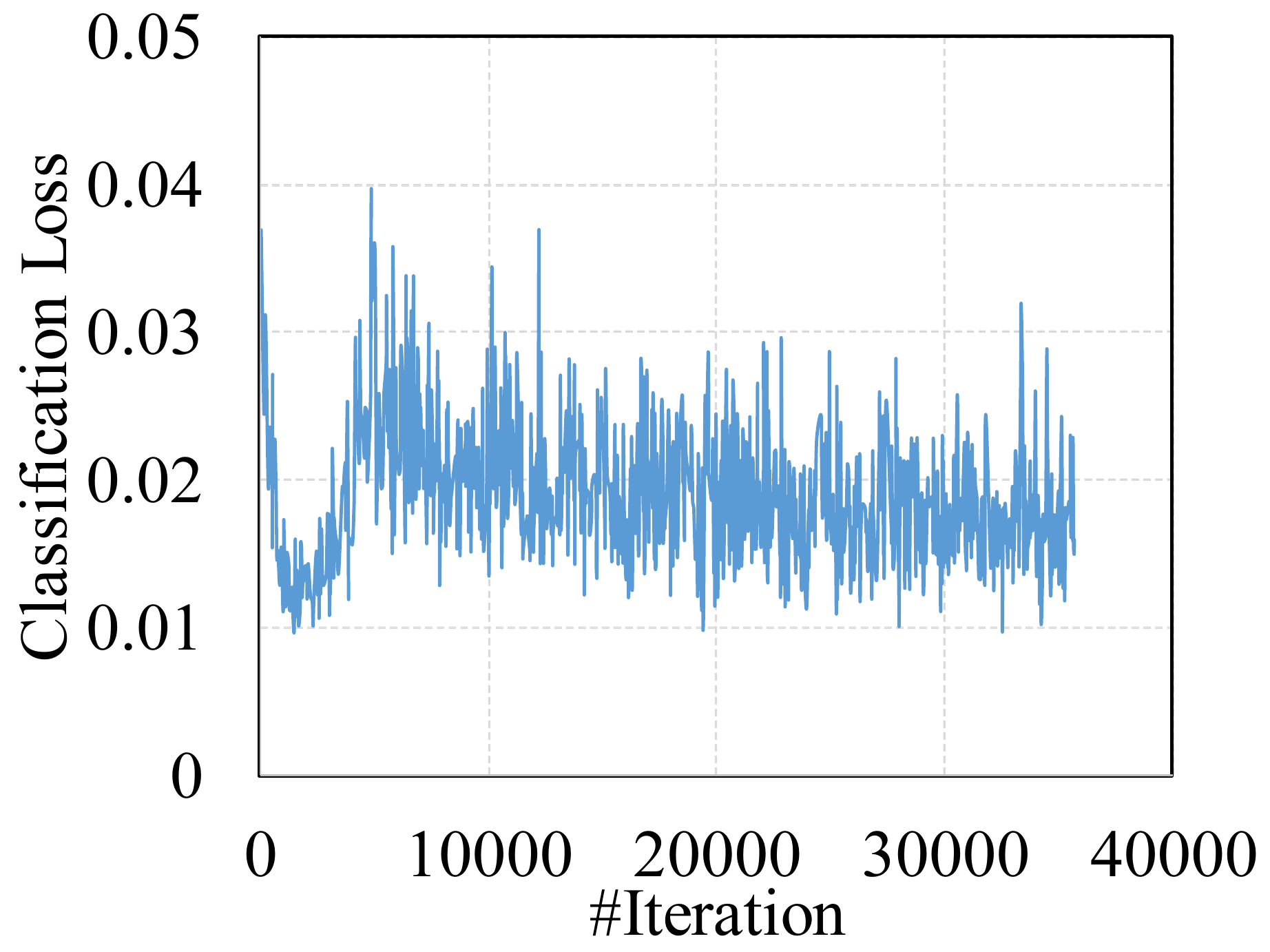}}
	\subfloat[]{\label{fig:stage2-loss:sub2}\includegraphics[width=0.3\linewidth]{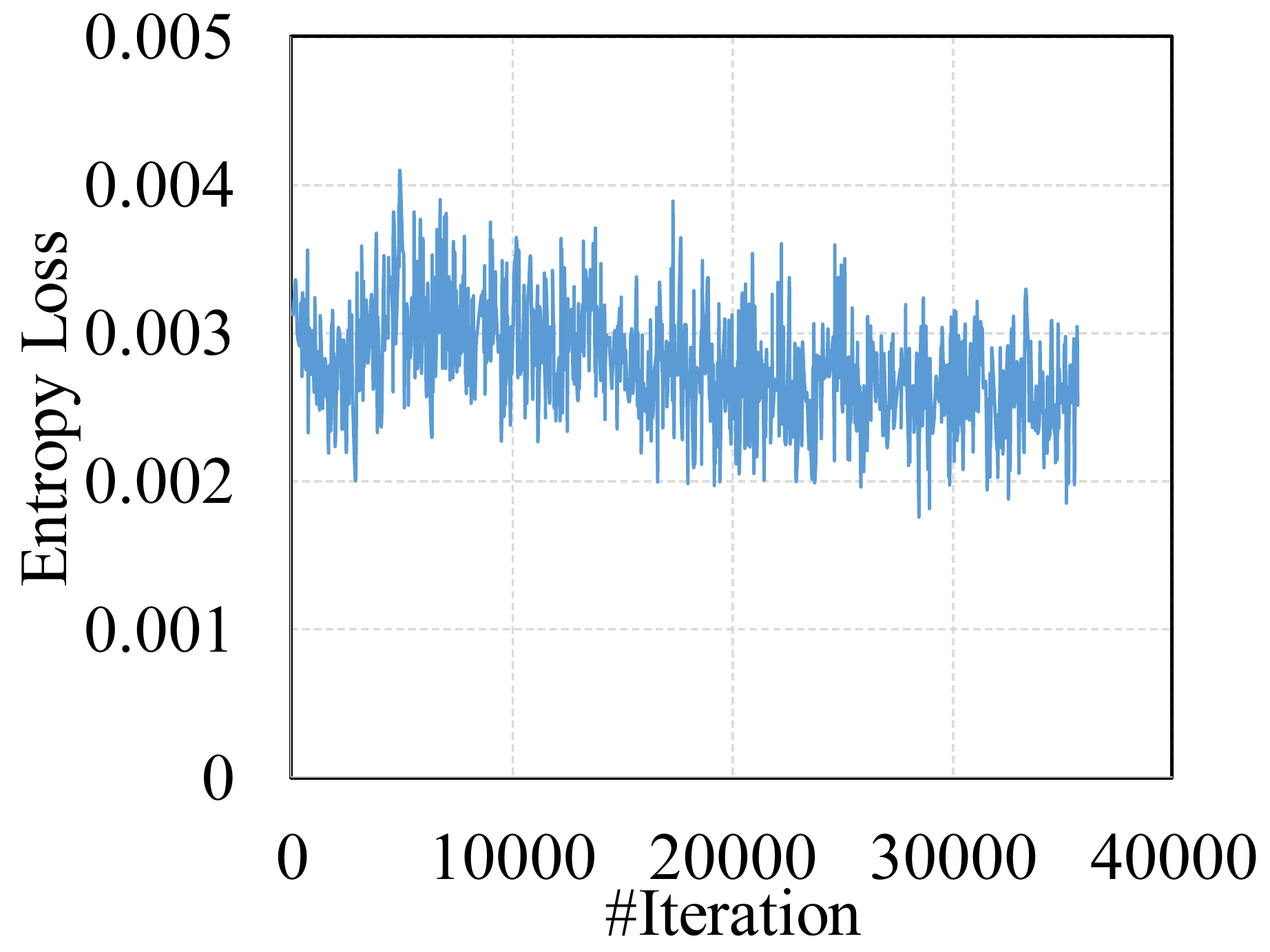}}
	\subfloat[]{\label{fig:stage2-loss:sub3}\includegraphics[width=0.3\linewidth]{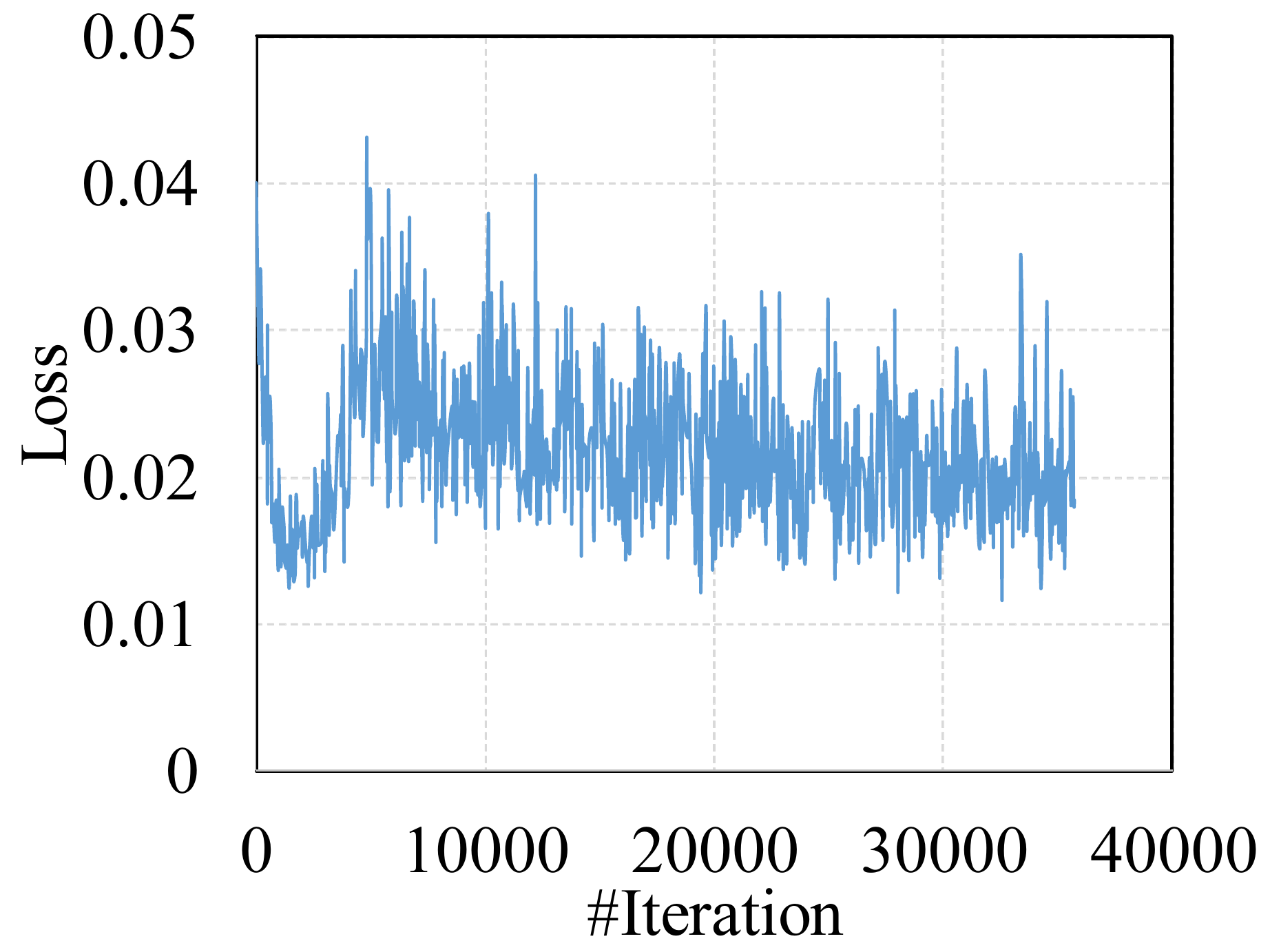}} 
	\caption{\protect\subref{fig:stage2-loss:sub1}, \protect\subref{fig:stage2-loss:sub2}, \protect\subref{fig:stage2-loss:sub3} show how $\mathcal{L}_c$, $\mathcal{L}_e$, $\mathcal{L}$ change by training D2 without R2 on CIFAR-10 (column a in Table~\ref{tab:ablation-result-table}), respectively. With a fixed learning rate, it is difficult for these loss terms to decrease.}
	\label{fig:stage2-loss} 
\end{figure}
\begin{figure}[!htb] 
	\centering 
	\subfloat[]{\label{fig:stage3-loss:sub1}\includegraphics[width=0.3\linewidth]{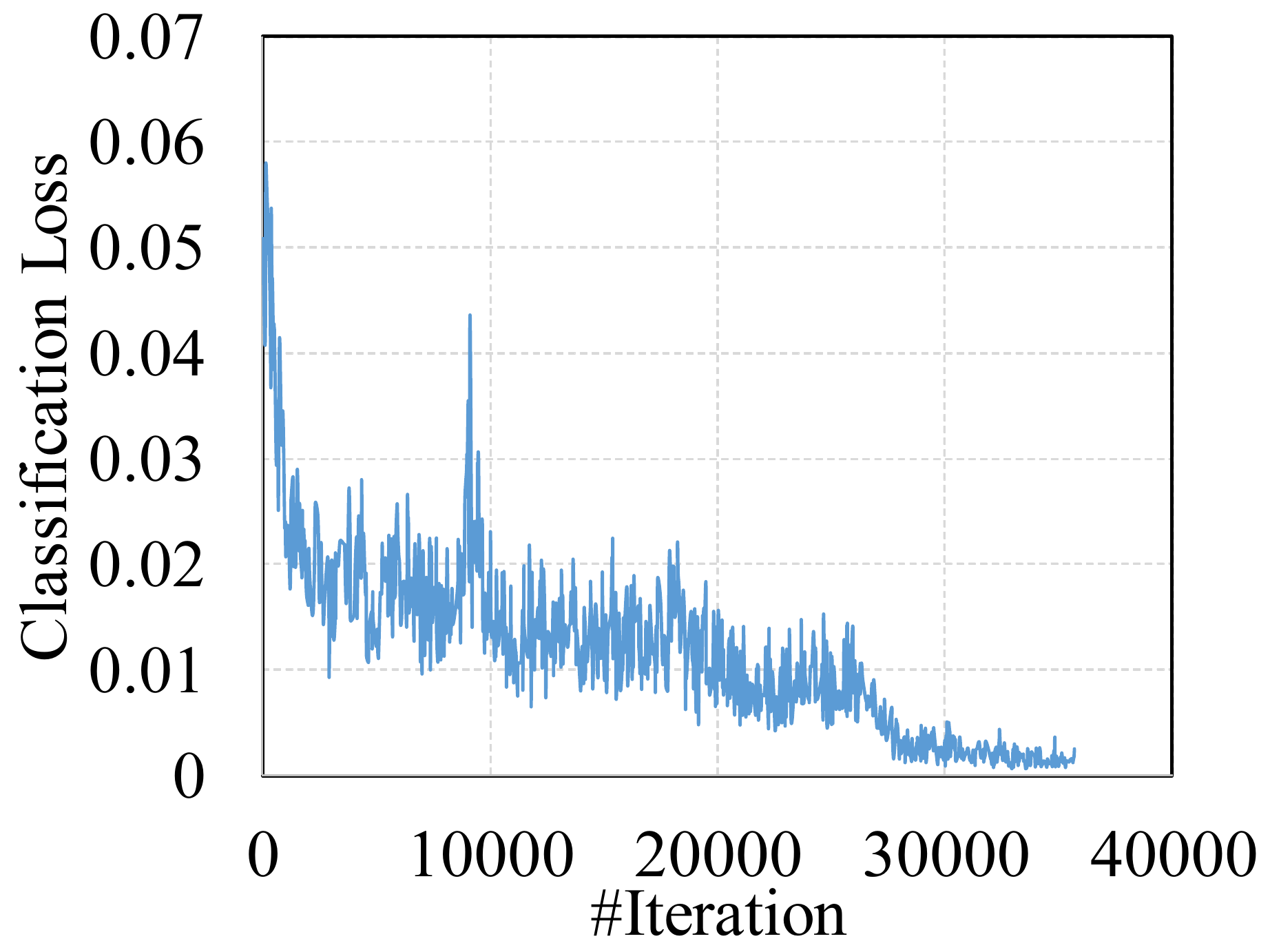}}
	\subfloat[]{\label{fig:stage3-loss:sub2}\includegraphics[width=0.3\linewidth]{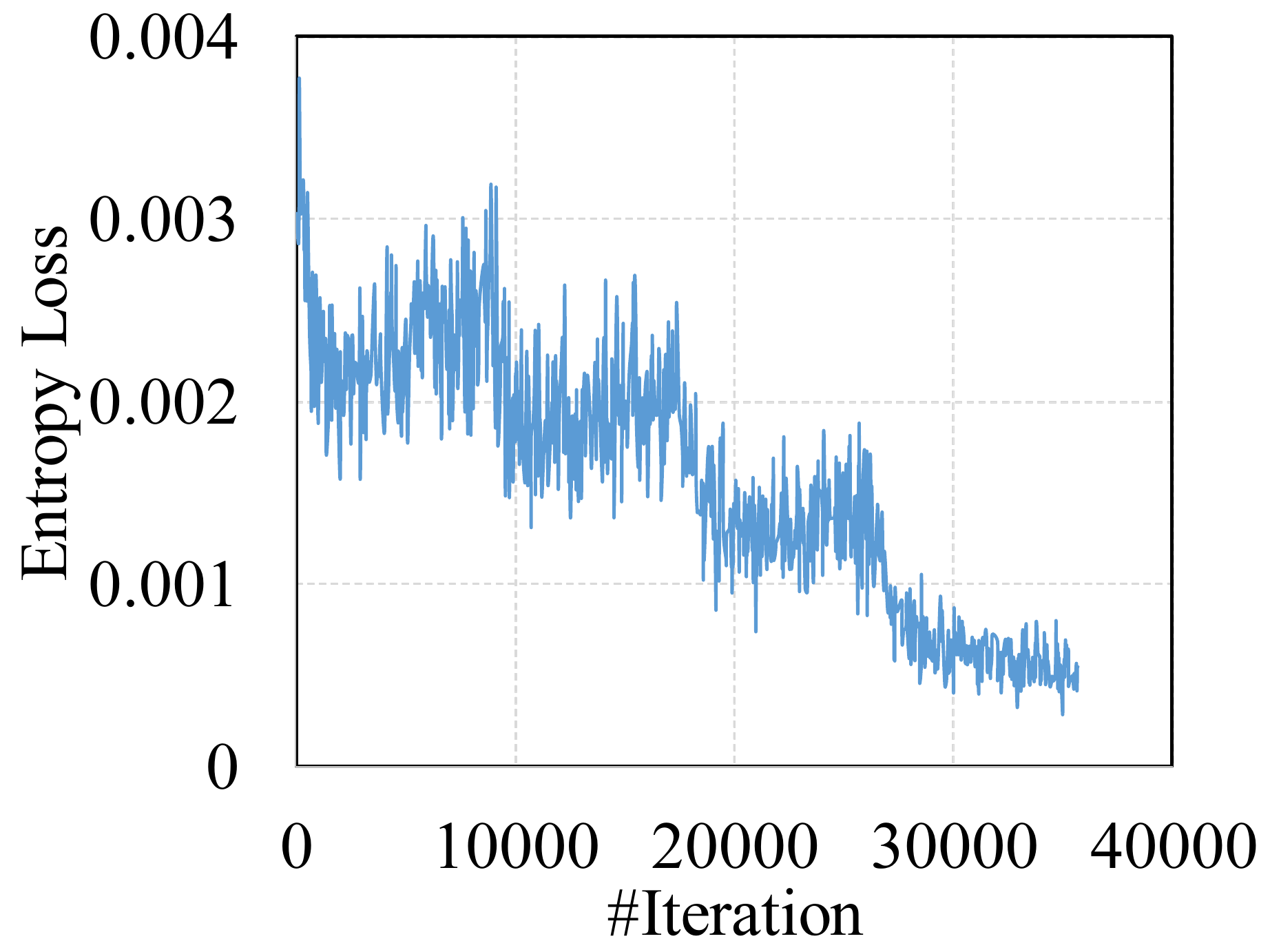}}
	\subfloat[]{\label{fig:stage3-loss:sub3}\includegraphics[width=0.3\linewidth]{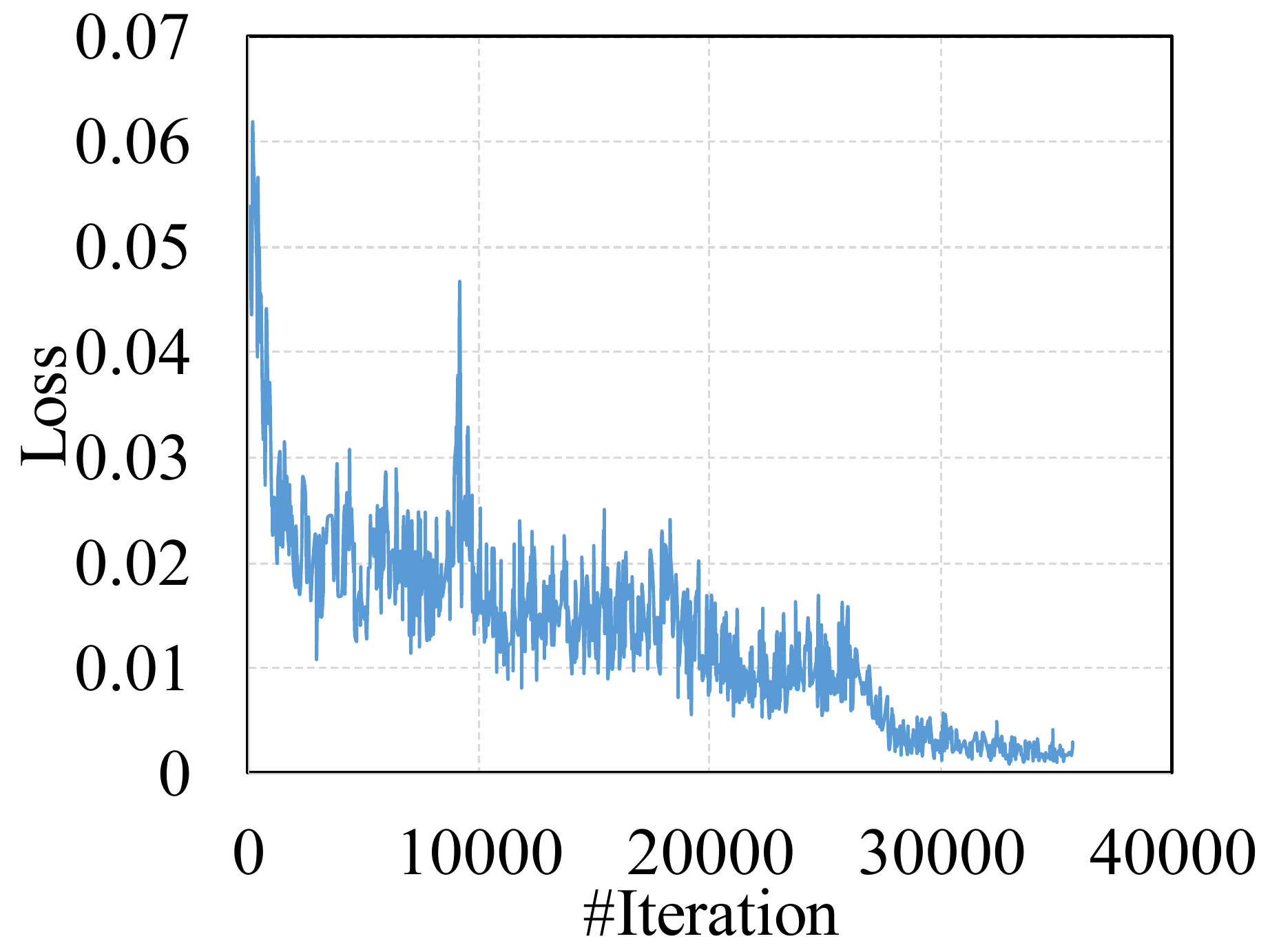}} 
	\caption{\protect\subref{fig:stage3-loss:sub1}, \protect\subref{fig:stage3-loss:sub2}, \protect\subref{fig:stage3-loss:sub3} show how $\mathcal{L}_c$, $\mathcal{L}_e$, $\mathcal{L}$ change by training D2 with R2 on CIFAR-10 (column e in Table~\ref{tab:ablation-result-table}), respectively. Reprediction occurs at 8750, 17500, and 26250 iterations. After each reprediction, we decrease the learning rate.}
	\label{fig:stage3-loss} 
\end{figure}

Apart from the repredictions, we also reduce the learning rate to boost the performance. If the D2 framework is trained by a fixed learning rate as in \citet{PENCIL}, the loss $\mathcal{L}$ did not descend in experiments (cf. Figure~\ref{fig:stage2-loss}). Reducing the learning rate can make the loss descend (cf. Figure~\ref{fig:stage3-loss}). We can get some benefits from a lower loss. On one hand, $\mathcal{L}_c$ is the KL divergence between pseudo-labels and the network predictions. Minimizing this term makes pseudo-labels as sharp as the network predictions. On the other hand, minimizing $\mathcal{L}_e$ can decrease the entropy of network predictions. So when it comes to the next reprediction, pseudo-labels will be more confident according to sharper predictions. 

\begin{figure}
	\centering	
	\includegraphics[width=0.9\linewidth]{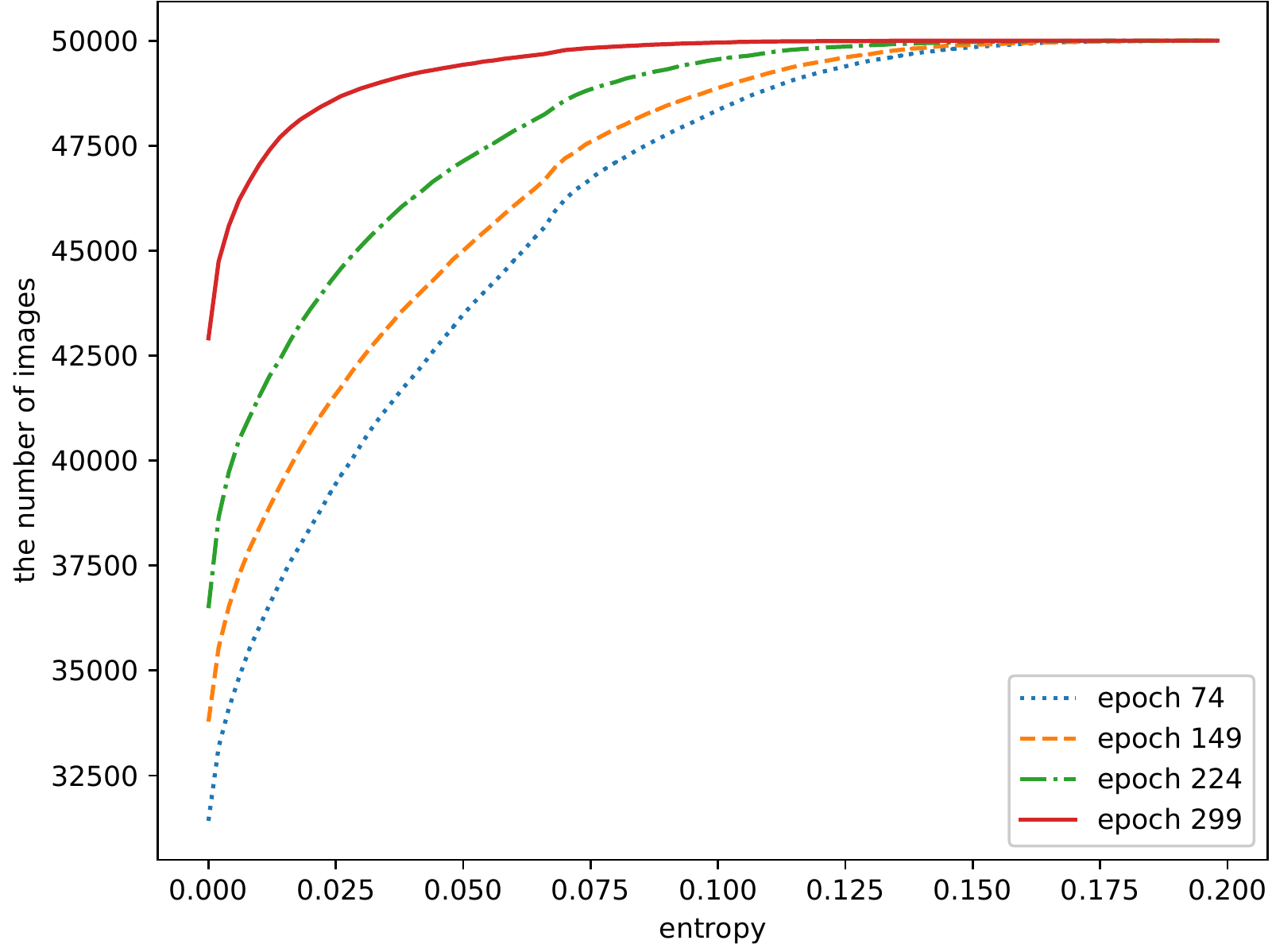}
	\caption{Cumulative distribution of the number of pseudo-labels versus the entropy after using the repetitive reprediction (R2) strategy on CIFAR-10. For each point $(n, e)$ on the line, it means there are $n$ images whose label entropies are less than $e$. Using the R2 strategy can make pseudo-labels sharper at the end of training. This figure is best viewed in color.}
	\label{fig:stage3-num_entropy}
\end{figure}

Finally, repredicting pseudo-labels frequently is harmful for performance. By using the R2 strategy every epoch, the network predictions and pseudo-labels are always the same and D2 cannot optimize pseudo-labels anymore. In CIFAR-10 experiments, we repredict pseudo-labels every 75 epochs and reduce the learning rate after each reprediction. Figure~\ref{fig:stage3-num_entropy} shows that using the R2 strategy can make pseudo-labels more confident at the end of training.

\subsection{The overall R2-D2 algorithm}

Now we propose the overall R2-D2 algorithm. The training can be divided into three stages. In the first stage, we only use labeled images to train the backbone network with cross entropy loss as in common network training. In the second stage, we use the backbone network trained in the first stage to predict pseudo-labels for unlabeled images. Then we use D2 to train the network and optimize pseudo-labels together. It is expected that this stage can boost the network performance and make pseudo-labels more precise. But according to our analysis, it is not enough to train D2 by only one stage. With the R2 strategy, D2 will be repredicted and trained for several times. In the third stage, the backbone network is finetuned by all images whose labels come from the second stage. For unlabeled images, we pick the class which has the maximum value in pseudo-labels and use the cross entropy loss to train the network. And pseudo-labels are not updated anymore. For labeled images, we use their groundtruth labels.

In general, R2-D2 is a simple method. It requires only one single network (versus two in Mean Teacher) and the loss function consists of two terms (versus three in Mean Teacher). The training processes in different stages are identical (share the same code), just need to change the value of two switch variables.

\section{Experiments}

In this section, we use four datasets to evaluate our algorithm: ImageNet~\citep{imagenet}, CIFAR-100~\citep{cifar}, CIFAR-10~\citep{cifar}, SVHN~\citep{SVHN}. We first use an ablation study to investigate the impact of the R2 strategy. We then report the results on these datasets to compare with state-of-the-arts. We also conduct experiments that use R2-D2 to finetune other SSL methods. At last, we evaluate R2-D2 under the realistic setting. All experiments were implemented using the PyTorch~\citep{pytorch} framework and run on a computer with TITAN Xp GPU.

\subsection{Implementation details}

Note that we trained the network using stochastic gradient descent with Nesterov momentum 0.9 in all experiments. We set $\alpha = 0.1$, $\beta=0.03$ and $\lambda=4000$ on \emph{all} datasets, which shows the robustness of our method to these hyperparameters. Other hyperparameters (e.g., batch size, learning rate, and weight decay) were set according to different datasets. 

\textbf{ImageNet} is a large-scale dataset with natural color images from 1000 categories. Each category typically has 1300 images for training and 50 for evaluation. Following the prior work~\citep{DCT_2018_ECCV,Stochastic_Transformations,VAE,Mean_teacher}, we uniformly choose 10\% data from training images as labeled data. That means there are 128 labeled data for each category. The rest of training images are considered as unlabeled data. We test our model on the validation set. The backbone network is ResNet-18~\citep{ResNet}. %More details can be found in Appendix~\ref{apx:ImageNet}.
The data augmentation we used is the same as that of \citet{Mean_teacher}, which includes random rotation, random resized crop to $224\times 224$, random horizontal flip and color jittering. 

In the first stage, we trained ResNet-18~\citep{ResNet} on 4 GPUs with the labeled data. We trained for 60 epochs with the weight decay of $5\times 10^{-5}$. Because the labeled dataset only contains 128000 images, the batch size was set as 160 to make the parameters update more times.  The learning rate was 0.1 at the beginning and decreased by cosine annealing~\citep{cosine_lr} so that it would reach 0 after 75 epochs. 

In the second stage, we trained for 60 epochs on 4 GPUs. We set the batch size as 800, 200 of which were labeled. The learning rate was 0.12 and did not change in this stage. During this stage, we found that pseudo-labels would be more accurate. Note that the capacity of ResNet-18 is small and it is hard for ResNet-18 to remember all pseudo-labels. To make pseudo-labels more accurate when repredicting, we finetuned the model using the dataset with pseudo-label. We finetuned for 60 epochs with initial learning rate 0.12 and decayed it with cosine annealing~\citep{cosine_lr} so that is would reach 0 after 65 epochs.

Repeating the second stage, we used the network at the end of last stage to repridict the pseudo-labels of unlabeled images. Then we trained the network and optimize pseudo-labels for 30 epochs with learning rate 0.04. Other settings were the same as the second stage.

In the third stage, we used the pseudo-labels at the end of last stage to finetune the model. We finetuned for 60 epochs with initial learning rate 0.04 and decayed it with cosine annealing~\citep{cosine_lr} so that is would reach 0 after 65 epochs.

\textbf{CIFAR-100} contains $32\times 32$ natural images from 100 categories. There are 50000 training images and 10000 testing images in CIFAR-100. Following \citet{Temporal_Ensembling,DCT_2018_ECCV,cvpr2019_pseudo_label}, we use 10000 images (100 per class) as labeled data and the rest 40000 as unlabeled data. We report the error rates on the testing images. The backbone network is ConvLarge~\citep{Temporal_Ensembling}. %More details can be found in Appendix~\ref{apx:CIFAR-100}.  
The data augmentation contained random translations, random horizontal flip and cutout~\citep{cutout}. 

In the first stage, we trained the ConvLarge network on 1 GPU with 10000 labeled images. To make the parameters update more times, we set the batch size as 20 and trained the network for 300 epochs. So the parameters of the network could update 500 times per epoch and update 150000 times totally in this stage. The initial learning rate was 0.05 and decreased by cosine annealing~\citep{cosine_lr} so that it would reach 0 after 350 epochs. The weight decay was set as 0.0002.

In the second stage, we optimized the network and pseudo-labels for 300 epochs on 4 GPUs. The batch size was 512, in which 128 images were labeled and others were unlabeled. The learning rate was 0.04 and did not change in this stage.

Repeating the second stage, we repredicted pseudo-labels at 0, 75, 150, 225 epoch. After each reprediction, we optimized the network and pseudo-labels for 75 epochs. The learning rate were set as 0.04, 0.03, 0.02, 0.01, respectively. Other settings were the same as the second stage.

In the third stage, we finetuned the network for 50 epochs with batch size 512. The learning rate was 0.01 at the beginning and decreased by cosine annealing~\citep{cosine_lr} so that it would reach 0 at the end.

\textbf{CIFAR-10} contains $32\times 32$ natural images from 10 categories. Following \citet{Temporal_Ensembling,VAT,Mean_teacher,DCT_2018_ECCV,HybridNet}, we use 4000 images (400 per class) from 50000 training images as labeled data and the rest images as unlabeled data. We report the error rates on the full 10000 testing images. The backbone network is Shake-Shake~\citep{shakeshake}. %More details can be found in Appendix~\ref{apx:CIFAR-10}.
The data augmentation contained random translations, random horizontal flip and cutout~\citep{cutout}.

All the settings were the same with that of CIFAR-100 except the learning rate and batch size. In the first stage, we trained the Shake-Shake network on 1 GPU with 4000 labeled images. We set the batch size as 40 and trained the network for 300 epochs. The initial learning rate was 0.05 and decreased by cosine annealing~\citep{cosine_lr} so that it would reach 0 after 350 epochs. The weight decay was set as 0.0002.

In the second stage, we optimized the network and pseudo-labels for 300 epochs on 4 GPUs. The batch size was 512, in which 128 images were labeled and others were unlabeled. The learning rate was 0.12 and did not change in this stage.

Repeating the second stage, we repredicted pseudo-labels at 0, 75, 150, 225 epoch. After each reprediction, we optimized the network and pseudo-labels for 75 epochs. The learning rate were set as 0.12, 0.08, 0.04, 0.004, respectively. Other settings were the same as the second stage.

In the third stage, we finetuned the network for 50 epochs with batch size 512. The learning rate was 0.01 at the beginning and decreased by cosine annealing~\citep{cosine_lr} so that it would reach 0 at the end.

\textbf{SVHN} dataset consists of $32\times 32$ house number images belonging to 10 classes. The category of each image is the centermost digit. There are 73257 training images and 26032 testing images in SVHN. Following \citet{Temporal_Ensembling,Mean_teacher,VAT,DCT_2018_ECCV}, we use 1000 images (100 per class) as labeled data and the rest 72257 training images as unlabeled data. The backbone network is ConvLarge~\citep{Temporal_Ensembling}. %More details can be found in Appendix~\ref{apx:SVHN}.
The data augmentation consists of adding gaussian noise to images like~\citet{Temporal_Ensembling,Mean_teacher} and cutout~\citep{cutout}.

The settings of learning rates and weight decay were the same as that of our training strategy for CIFAR-10. In the first stage, we trained the ConvLarge~\citep{Temporal_Ensembling} network on 1 GPU for 180 epochs with batch size 10. In the second stage, the batch size was set as 512, in which 128 images were labeled. The network was trained for 180 epochs. Repeating the second stage, pseudo-labels were repredicted at 0, 45, 90, 135 epoch. In the third stage, we finetuned the network for 180 epochs.

\subsection{Ablation studies}

\begin{table}
	\caption{Ablation studies when using different strategies to train our end-to-end framework on CIFAR-10. ($\alpha=0.1,\beta=0.03,\lambda=4000$)}
	\label{tab:ablation-result-table}
	\centering
	\begin{tabular}{cccccc}
		\toprule
		& a & b & c & d & e \\
		\midrule
		The 2nd stage 			& $\checkmark$ 	& $\checkmark$  & $\checkmark$ & $\checkmark$ &$\checkmark$  \\
		Repeat the 2nd stage	&				& $\checkmark$  & $\checkmark$ & $\checkmark$ &$\checkmark$  \\
		Reprediction			&  				& 				& $\checkmark$ & 			 &$\checkmark$  \\
		Reducing LR 			&  				& 				&			  & $\checkmark$ &$\checkmark$  \\
		Error rates (\%) 		& 6.71 			& 6.37 			& 6.23		   & 5.94		 & 5.78 \\
		\bottomrule
	\end{tabular}
\end{table}

\begin{table}
	\caption{Ablation studies when using different $\alpha$ to train our end-to-end framework on CIFAR-10. ($\beta=0.03,\lambda=4000$)}
	\label{tab:ablation-alpha}
	\centering
	\begin{tabular}{cccccc}
		\toprule
		$\alpha$ & 0.1 & 0.2 & 0.3 & 0.4 & 0.5 \\
		\midrule
		Error rates (\%) & 5.78 & 5.44 & 5.81 & 5.90 & 6.11 \\
		\bottomrule
	\end{tabular}
\end{table}

\begin{table}
	\caption{Ablation studies when using different $\beta$ to train our end-to-end framework on CIFAR-10. ($\alpha=0.1,\lambda=4000$)}
	\label{tab:ablation-beta}
	\centering
	\begin{tabular}{cccccc}
		\toprule
		$\beta$ & 0.01 & 0.02 & 0.03 & 0.04 & 0.05 \\
		\midrule
		Error rates (\%) & 5.62 & 5.75 & 5.78 & 5.83 & 5.76 \\
		\bottomrule
	\end{tabular}
\end{table}

\begin{table}
	\caption{Ablation studies when using different $\lambda$ to train our end-to-end framework on CIFAR-10. ($\alpha=0.1,\beta=0.03$)}
	\label{tab:ablation-lambda}
	\centering
	\begin{tabular}{cccccc}
		\toprule
		$\lambda$ & 1000 & 2000 & 3000 & 4000 & 5000 \\
		\midrule
		Error rates (\%) & 5.85 & 5.85 & 5.82 & 5.78 & 5.53 \\
		\bottomrule
	\end{tabular}
\end{table}

\begin{table}
	\caption{Ablation studies when using different $\mathcal{L}_c$ to train our end-to-end framework on CIFAR-10. ($\alpha=0.1,\beta=0.03,\lambda=4000$)}
	\label{tab:ablation-loss}
	\centering
	\begin{tabular}{cccccc}
		\toprule
		$\mathcal{L}_c$ & $KL(\hat{\mathbf{p}}||\tilde{\mathbf{p}})$ & $KL(\tilde{\mathbf{p}}||\hat{\mathbf{p}})$ & $\|\tilde{\mathbf{p}} - \hat{\mathbf{p}}\|_2^2$ \\
		\midrule
		Error rates (\%) & 5.78 & 8.06 & 6.35 \\
		\bottomrule
	\end{tabular}
\end{table}

\begin{table*}
	\caption{Error rates (\%) on the validation set of ImageNet benchmark with 10\% images labeled. ``-'' means that the original papers did not report the corresponding error rates. ResNet-18 is used. }
	\label{tab:ImageNet-result-table}
	\centering
	\begin{tabular}{c|llccc}
		\toprule
		&
		Method	& Backbone		&	\#Param	&	Top-1	&	Top-5	\\
		\midrule
		\multirow{2}*{Supervised} &
		100\% Supervised			
		&	ResNet-18	&	11.6M	&	30.43	&  10.76	\\
		&
		10\% Supervised	
		&	ResNet-18	&	11.6M	&	52.23	&	27.54	\\
		\midrule
		\multirow{5}*{Semi-supervised} &
		Stochastic Transformations
		&	AlexNet		&	61.1M	&     -		& 39.84		\\
		&							
		VAE with 10\% Supervised
		&	Customized	&	30.6M	&	51.59	&	35.24	\\
		&							
		Mean Teacher				
		&	ResNet-18	&	11.6M	&	49.07	&  23.59	\\
		&							
		Dual-View Deep Co-Training
		&	ResNet-18	&	11.6M	&	46.50	&	22.73	\\	
		&
		R2-D2					
		&	ResNet-18	&	11.6M	&\textbf{41.55}&\textbf{19.52}\\
		\midrule
		\multirow{1}*{Self-supervised + Semi-supervised} &
		RotNet + R2-D2					
		&	ResNet-18	&	11.6M	&\textbf{40.54}&\textbf{18.76}\\
		\bottomrule
	\end{tabular}
\end{table*}

\begin{figure}
	\centering 
	\subfloat[]{\label{fig:ablation-c:sub1}\includegraphics[width=0.3\linewidth]{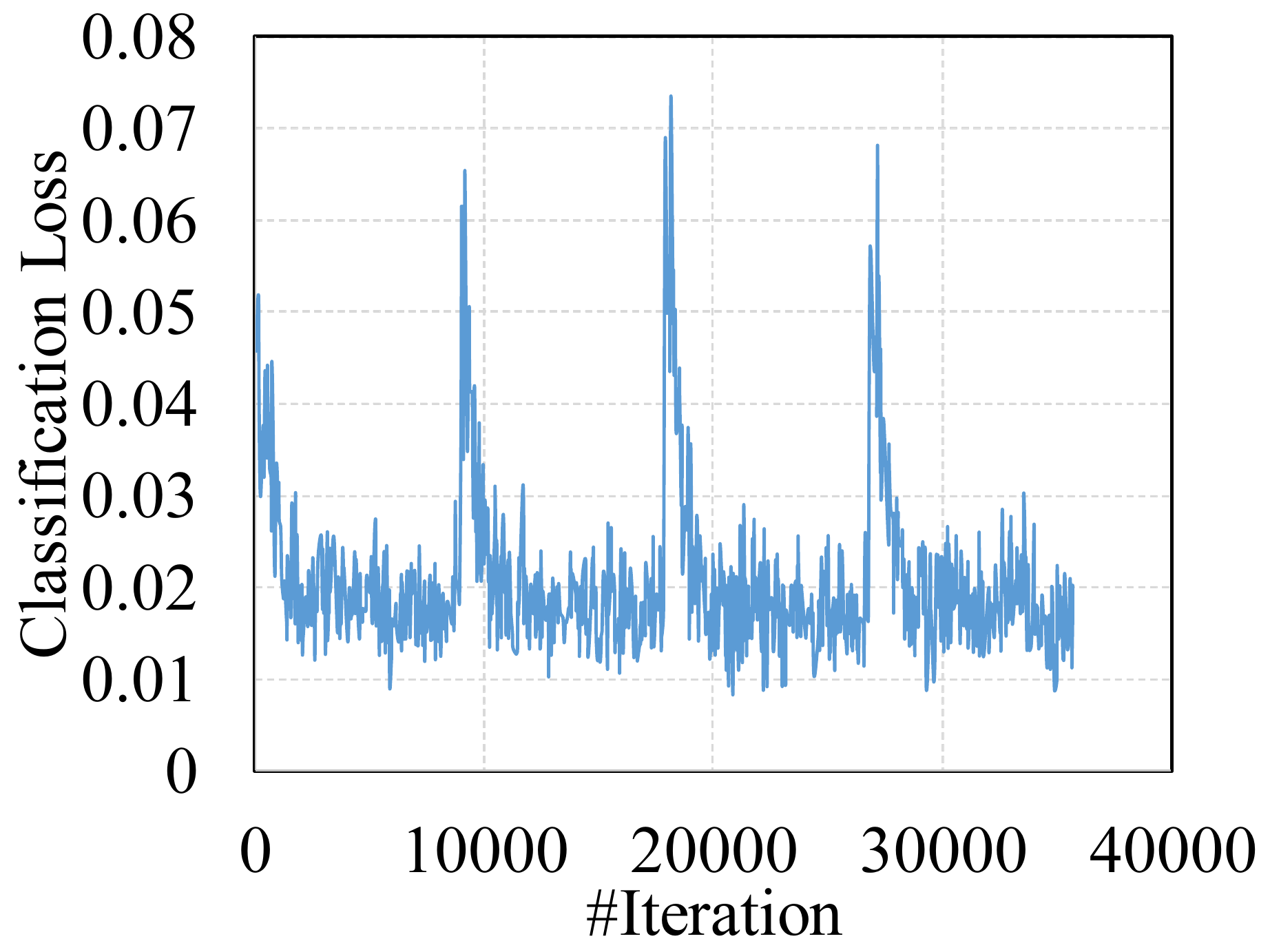}}
	\subfloat[]{\label{fig:ablation-c:sub2}\includegraphics[width=0.3\linewidth]{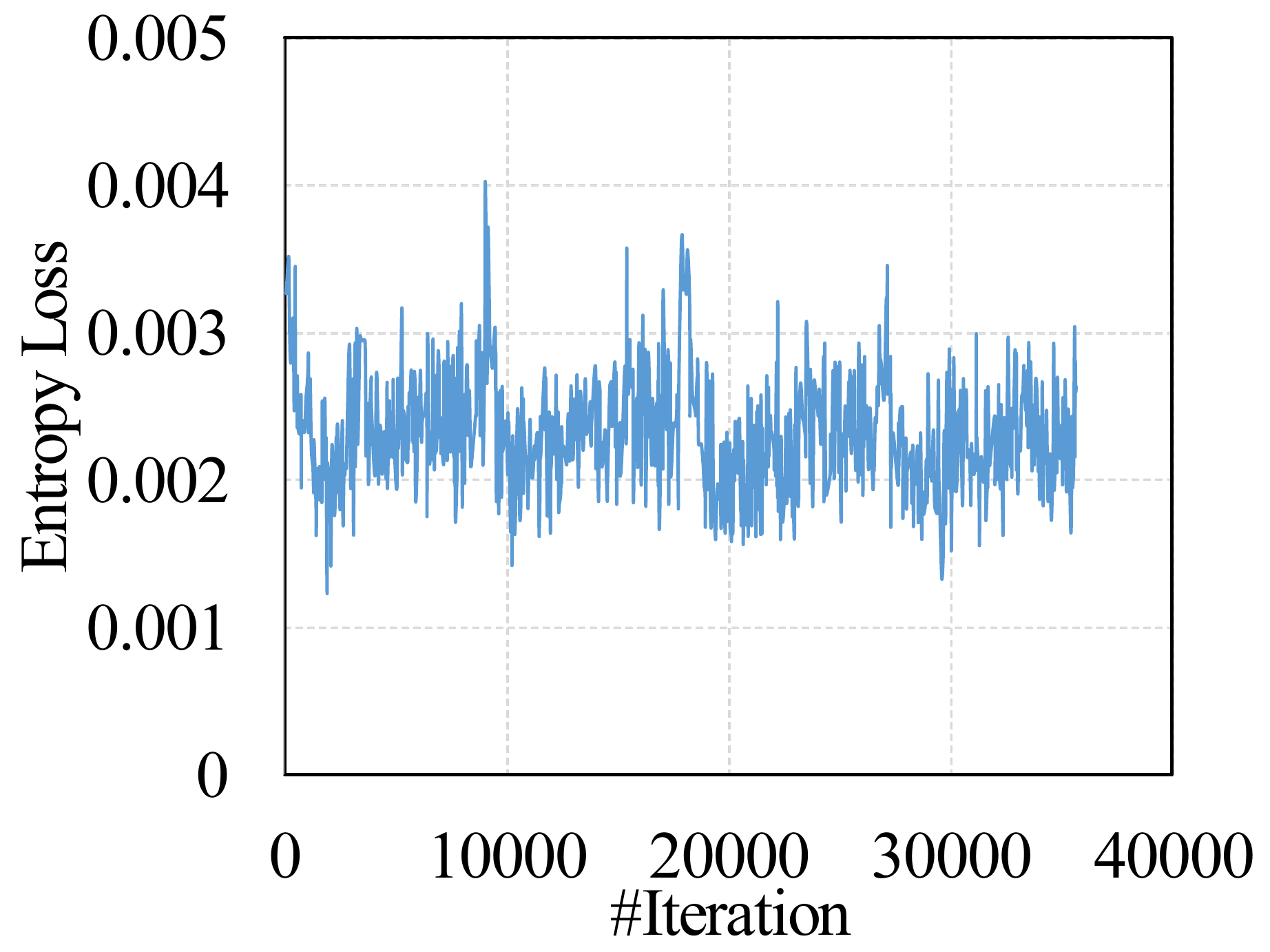}}
	\subfloat[]{\label{fig:ablation-c:sub3}\includegraphics[width=0.3\linewidth]{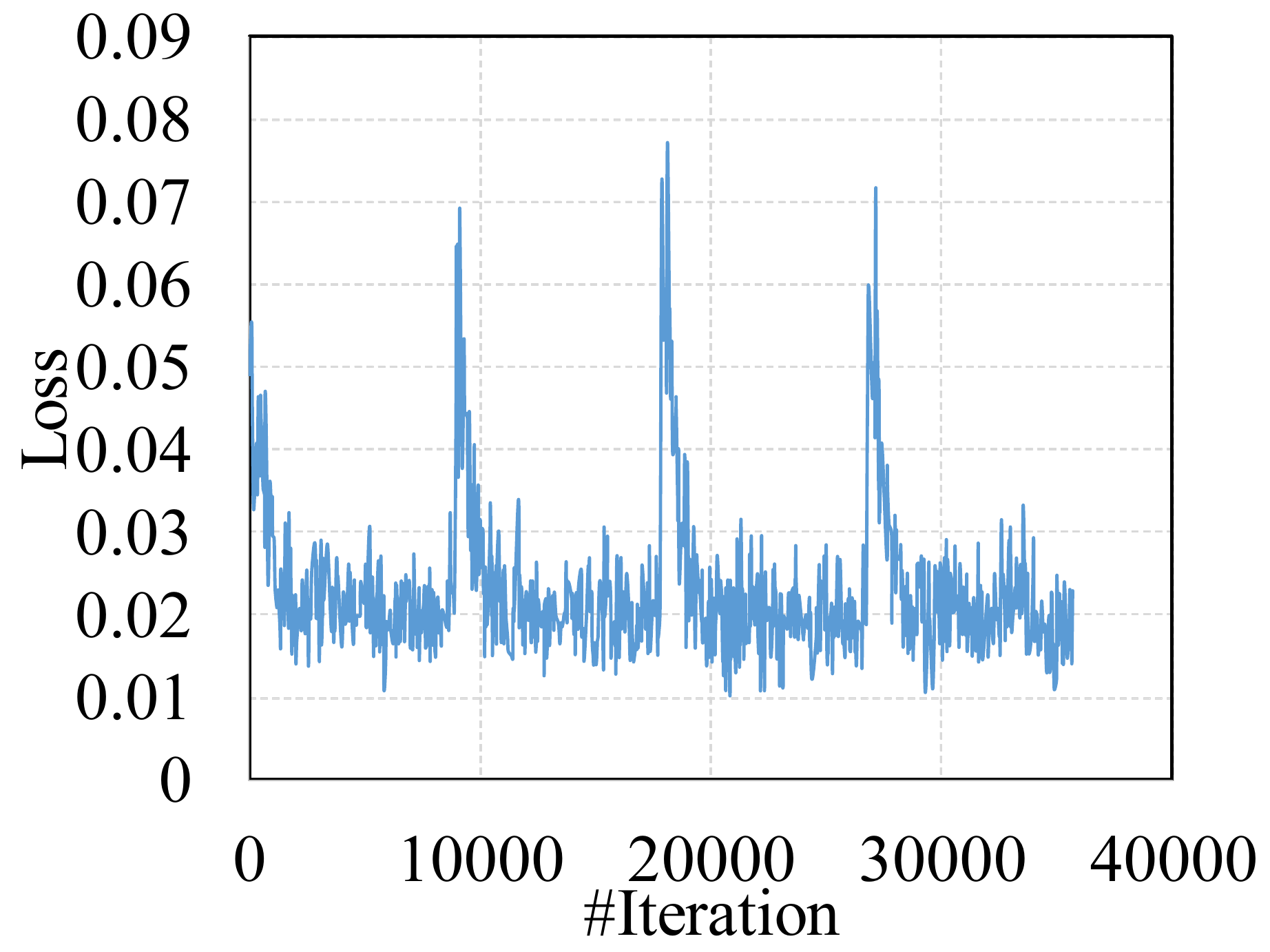}} 
	\caption{\protect\subref{fig:ablation-c:sub1}, \protect\subref{fig:ablation-c:sub2}, \protect\subref{fig:ablation-c:sub3} show how $\mathcal{L}_c$, $\mathcal{L}_e$, $\mathcal{L}$ change by training D2 with only reprediction on CIFAR-10 (column c in Table~\ref{tab:ablation-result-table}), respectively. Reprediction occurs at 8750, 17500, and 26250 iterations. Without reducing the learning rate, the loss cannot decrease and network prediction keep flat. So pseudo-labels will not become sharp.}
	\label{fig:ablation-c} 
\end{figure}
\begin{figure}
	\centering 
	\subfloat[]{\label{fig:ablation-d:sub1}\includegraphics[width=0.3\linewidth]{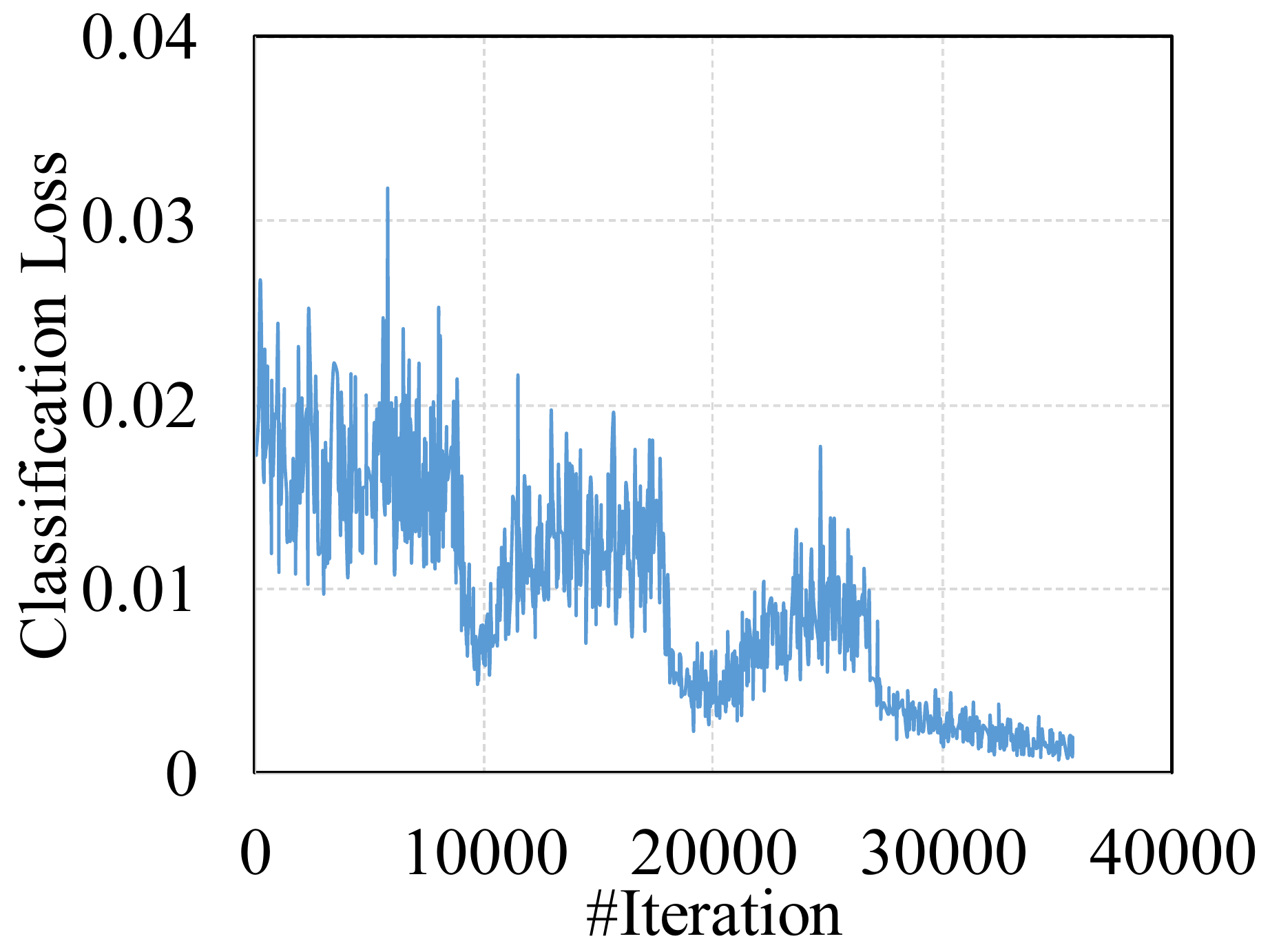}}
	\subfloat[]{\label{fig:ablation-d:sub2}\includegraphics[width=0.3\linewidth]{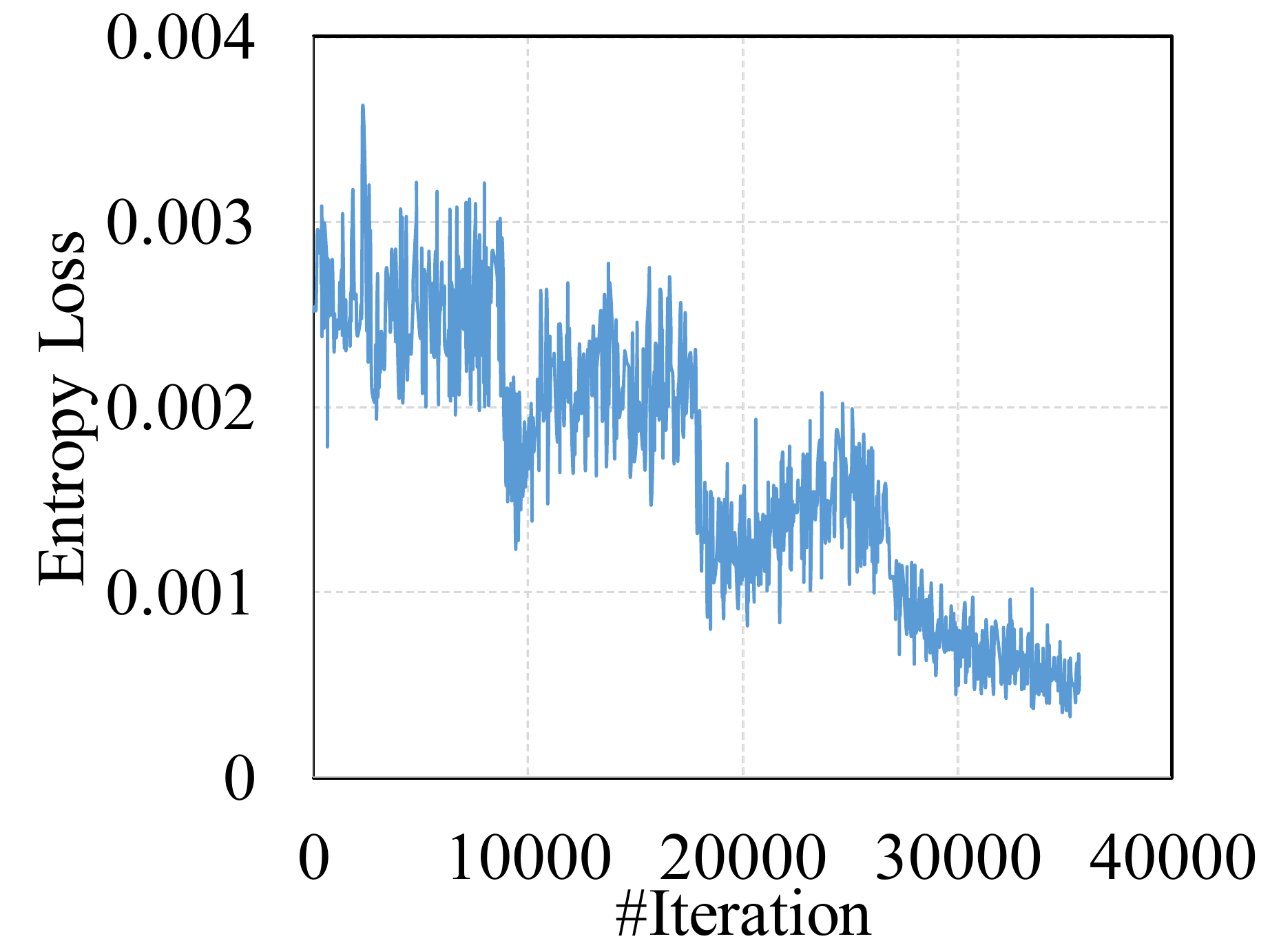}}
	\subfloat[]{\label{fig:ablation-d:sub3}\includegraphics[width=0.3\linewidth]{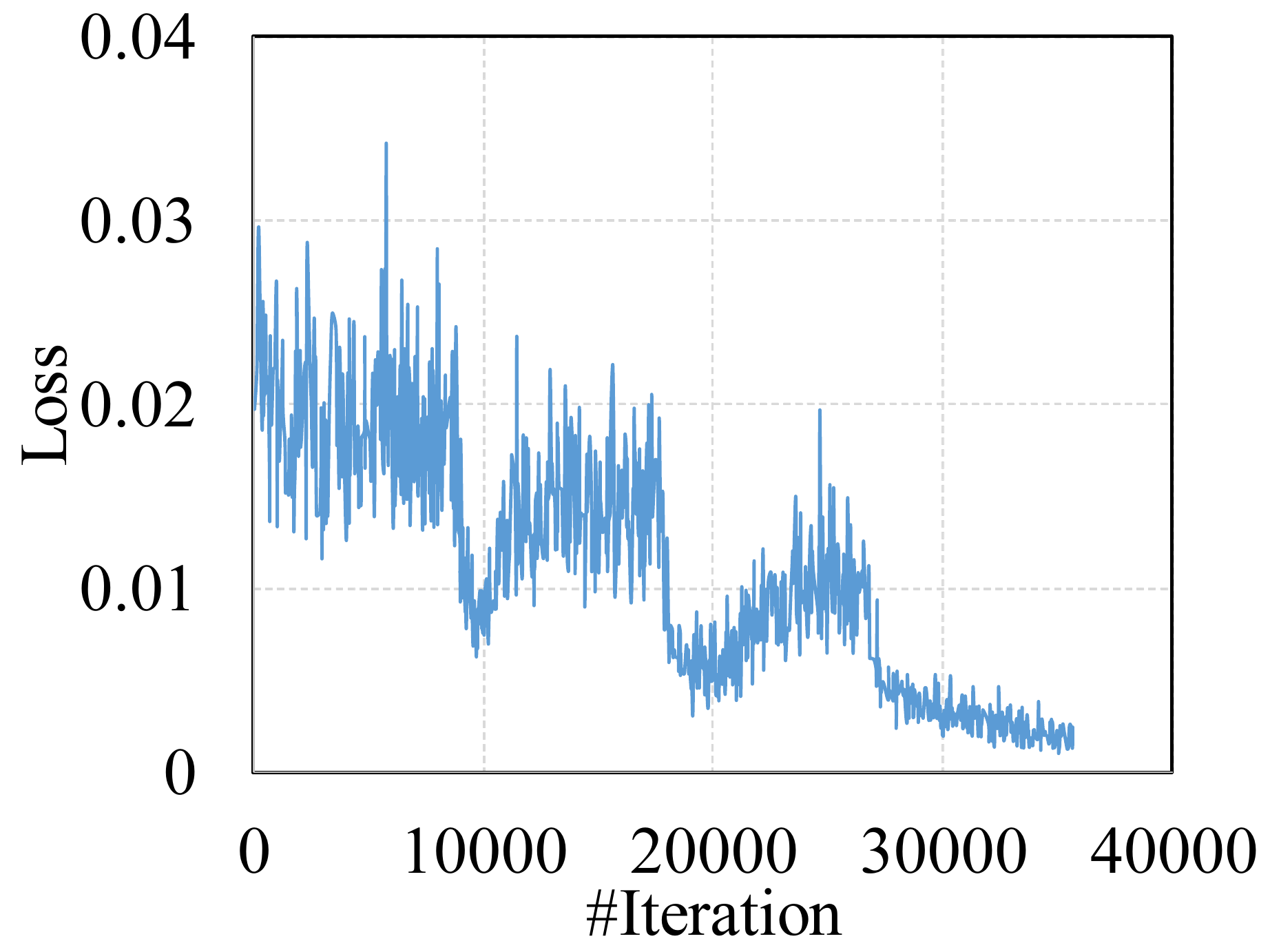}} 
	\caption{\protect\subref{fig:ablation-d:sub1}, \protect\subref{fig:ablation-d:sub2}, \protect\subref{fig:ablation-d:sub3} show how $\mathcal{L}_c$, $\mathcal{L}_e$, $\mathcal{L}$ change by training D2 with only reducing learning rate on CIFAR-10 (column d in Table~\ref{tab:ablation-result-table}), respectively. Reducing learning rate occurs at 8750, 17500, and 26250 iterations. It can make loss decrease. But the learning algorithm is still impacted by the equality condition bias. }
	\label{fig:ablation-d} 
\end{figure}

Now we validate our framework by an ablation study on CIFAR-10 with the Shake-Shake backbone and 4000 labeled images. All experiments used the same data splits and ran once. And they all used the first stage to initialize D2 and the third stage to finetune the network. Table~\ref{tab:ablation-result-table} presents the results and the error rates are produced by the last epoch of the third stage. Different columns denote using different strategies to train D2 in the second stage.
First, without R2 (column a), the error rate of a basic D2 learning is $6.71\%$, which is already competitive with state-of-the-arts. Next, we repeated the second stage without reprediction or reducing learning rate (column b). That means the network is trained by the first stage, the second stage, repeat the second stage, and the third stage. This network achieved a $6.37\%$ error rate, which demonstrates training D2 for more epochs can boost performance and the network will not overfit easily. Repeating the second stage with reprediction (column c) could make the error rate even lower, to $6.23\%$. But, without reducing the learning rate, $\mathcal{L}$ did not decrease (cf. Figure~\ref{fig:ablation-c:sub3}). On the other hand, repeating the second stage and reducing the learning rate (column d) can get better results ($5.94\%$). However, only reducing the learning rate cannot remove the impact of the equality constraint bias. At last, applying both strategies (column e) improved the results by a large margin to $5.78\%$.

Table~\ref{tab:ablation-alpha} presents the results with different $\alpha$. $\beta$ is $0.03$ in all experiments. We find setting $\alpha=0.2$ will achieve a better performance and a large $\alpha$ may degrade the performance. Table~\ref{tab:ablation-beta} shows the results with different $\beta$ when setting $\alpha=0.1$. Compared with $\alpha$, our method is robust to $\beta$. The highest error rate is $5.83\%$ and the lowest error rate is $5.62\%$. There is roughly $0.2\%$ between them. Table~\ref{tab:ablation-lambda} studies the sensitiveness of our method under different $\lambda$ which is the learning rate for updating pseudo-labels (cf. Equation~\ref{updating-formula}). Intuitively, the pseudo-labels can hardly be updated by a small $\lambda$. And with a large $\lambda$, the pseudo-labels will always be the same as the predictions and thus the training will fail. In practice, we find our method is not sensitive to $\lambda$. With $\lambda=1000$, the error rate is $5.85\%$, only slightly worse than $\lambda=4000$. When setting $\lambda=5000$, the performance is even better. Overall, R2-D2 is robust to these hyperparameters. And when apply R2-D2, we suggest that $\alpha=0.1, \beta=0.03$ and $\lambda=4000$ is a safe starting point to tune these hyperparameters. All experiments in the rest of our paper used $\alpha=0.1, \beta=0.03,\lambda=4000$. Please note that we did not carefully tune these hyperparameters. Error rates of R2-D2 may be lower than those reported in this paper if we tune them carefully.

Table~\ref{tab:ablation-loss} shows the results with different $\mathcal{L}_c$. Note that our loss function is defined as $\mathcal{L}=\alpha\mathcal{L}_c + \beta\mathcal{L}_e$. The loss function determines how the network parameters and pseudo-labels update. That means different $\mathcal{L}_c$ result in different updating formulas of pseudo-labels. The default $\mathcal{L}_c$ is $KL(\hat{\mathbf{p}}||\tilde{\mathbf{p}})$ and the updating formula is Equation~\ref{updating-formula}. When set $\mathcal{L}_c=KL(\tilde{\mathbf{p}}||\hat{\mathbf{p}})$, the gradients of $\mathcal{L}$ with respect to $\tilde{y}_n$ is 
\begin{equation}
\frac{\partial\mathcal{L}}{\partial\tilde{y}_n}
=\alpha
\tilde{p}_n
[\log \tilde{p}_n
- \log \hat{p}_n
- \sum_{k=1}^{N}\tilde{p}_k(\log\tilde{p}_k - \log\hat{p}_k)]
\,,
\end{equation}
where $\hat{\mathbf{p}} = \sigma(\hat{\mathbf{y}})$ and $\tilde{\mathbf{p}} = \sigma(\tilde{\mathbf{y}})$. With $\mathcal{L}_c=\|\tilde{\mathbf{p}} - \hat{\mathbf{p}}\|_2^2$, the gradients of $\mathcal{L}$ with respect to $\tilde{y}_n$ is 
\begin{equation}
\frac{\partial\mathcal{L}}{\partial\tilde{y}_n}
=2\alpha
\tilde{p}_n
[\tilde{p}_n
- \hat{p}_n
- \sum_{k=1}^{N}\tilde{p}_k(\tilde{p}_k - \hat{p}_k)]
\,.
\end{equation}
Note that due to the sigmoid transform, $\tilde{\mathbf{p}}$ and $\hat{\mathbf{p}}$ are much smaller than $\tilde{\mathbf{y}}$, so all of them need a large $\lambda$ to update pseudo-labels.
The experimental results demonstrate superior performance of R2-D2 with $\mathcal{L}_c = KL(\hat{\mathbf{p}}||\tilde{\mathbf{p}})$. It obtains $2.28\%$ lower error rate than $KL(\tilde{\mathbf{p}}||\hat{\mathbf{p}})$ and $0.57\%$ lower error rate than $\|\tilde{\mathbf{p}} - \hat{\mathbf{p}}\|_2^2$.

\subsection{Results on ImageNet}

Table~\ref{tab:ImageNet-result-table} shows our results on ImageNet with 10\% labeled samples. The setup followed that in \citet{DCT_2018_ECCV}. The image size in training and testing is $224\times 224$. For the fairness of comparisons, the error rate is from single model without ensembling. We use the result of the last epoch. Our experiment is repeated three times with different random subsets of labeled training samples. The Top-1 error rates are $41.64$, $41.35$, and $41.65$, respectively. The Top-5 error rates are $19.53$, $19.60$, and $19.44$, respectively. R2-D2 achieves significantly lower error rates than Stochastic Transformations \citep{Stochastic_Transformations} and VAE~\citep{VAE}, although they used the larger input size $256\times 256$. With the same backbone and input size, R2-D2 obtains roughly $5\%$ lower Top-1 error rate than that of DCT~\citep{DCT_2018_ECCV} and $7.5\%$ lower Top-1 error rate than that of Mean Teacher~\citep{Mean_teacher}. R2-D2 outperforms the previous state-of-the-arts by a large margin. The performances of Mean Teacher~\citep{Mean_teacher} with ResNet-18~\citep{ResNet} is quoted from \citet{DCT_2018_ECCV}.

Self-supervised learning is another way to utilize unlabeled data. In self-supervised learning, it needs to define a pretext task to train the network. By solving the pretext task, we expect the network can learn better representations. And with the better representations, the network finetuned by a few labeled data can get a better performance than training it from scratch. Recently, RotNet~\citep{RotNet} is a simple and promising self-supervised learning technique. RotNet uses recognizing the image rotation as the pretext task. We can combine R2-D2 with RotNet. First, we train the network by recognizing the image rotation ($0^\circ$, $90^\circ$, $180^\circ$, $270^\circ$) with all images (labeled images and unlabeled images). Then, we replace its FC layer by 1000-class weights of random initialization and use R2-D2 to train the network. Table~\ref{tab:ImageNet-result-table} shows the results and using RotNet pretrained weight can improve roughtly $1\%$ without bells and whistles.

\begin{table*}
	\caption{Error rates (\%) on the validation set of ImageNet benchmark with 10\% images labeled. ``-'' means that the original papers did not report the corresponding error rates. ResNet-50 is used. }
	\label{tab:ImageNet-resnet50-table}
	\centering
	\begin{tabular}{c|llcc}
		\toprule
		&
		Method	& Backbone		&	Top-1	&	Top-5	\\
		\midrule
		\multirow{2}*{Supervised} &
		100\% Supervised	&	ResNet-50	&	23.75	&  7.23	\\
		&
		10\% Supervised		&	ResNet-50	&	45.55	&  20.73	\\
		\midrule
		\multirow{6}*{Semi-supervised} &
		Pseudo-label		&	ResNet-50v2	&     -		&  17.59	\\
		
		&							
		VAT					&	ResNet-50v2	&	-		&  17.22	\\
		
		&							
		VAT + EntMin		&	ResNet-50v2	&	-		&  16.61	\\
		
		&							
		S$^4$L-Rotation		&	ResNet-50v2	&	-		&  16.18	\\
		
		&							
		S$^4$L-Exemplar		&	ResNet-50v2	&	-		&  16.28	\\	
		
		&
		R2-D2				&	ResNet-50	&\textbf{34.01}&\textbf{14.07}\\
		\bottomrule
	\end{tabular}
\end{table*}

Table~\ref{tab:ImageNet-resnet50-table} shows our results with the ResNet-50 backbone network. The setup is the same as that of ResNet-18. And our experiment was run for once. ResNet-50 denotes the regular type~\citep{ResNet} and ResNet-50v2 denotes the pre-activation variants~\citep{ResNetv2}. The results of Pseudo-label, VAT, VAT + EntMin, S$^4$L-Rotation, S$^4$L-Exemplar are quoted from \citet{S4L}. And R2-D2 is significantly better than them. Note that \citet{S4L} proposed MOAM (Mix Of All Models) and got a better performance. However, they used a $4\times$ wider model as backbone network and it is not fair to compare ours with MOAM's results.

\subsection{Results on CIFAR-100}

\begin{table*}
	\caption{Error rates (\%) on CIFAR-100 benchmark with 10000 images labeled.}
	\label{tab:CIFAR-100-result-table}
	\centering
	\begin{tabular}{c|lcr@{.}l}
		\toprule
		&Method							&	Backbone	&	\multicolumn{2}{c}{Error rates (\%)}		\\
		\midrule
		\multirow{2}*{Supervised} &
		
		100\% Supervised				&	ConvLarge		&	$26$&$42 \pm 0.17$ 			\\
		&
		Using 10000 labeled images only	&	ConvLarge		&	$38$&$36 \pm 0.27$	\\
		\midrule
		\multirow{6}*{Semi-supervised} &		
		Temporal Ensembling				&	ConvLarge		&	$38$&$65 \pm 0.51$	\\
		&
		LP								&	ConvLarge		&	$38$&$43 \pm 1.88$	\\
		&
		Mean Teacher					&	ConvLarge		&	$36$&$08 \pm 0.51$	\\
		&
		LP + Mean Teacher				&	ConvLarge		&	$35$&$92 \pm 0.47$	\\
		&
		DCT								&	ConvLarge		&	$34$&$63 \pm 0.14$	\\
		&
		R2-D2 							&	ConvLarge		&$\textbf{32}$&$\textbf{87}\pm\textbf{0.51}$\\
		\bottomrule
	\end{tabular}
\end{table*}

Table~\ref{tab:CIFAR-100-result-table} presents experimental results on CIFAR-100 with 10000 labeled samples. All methods used ConvLarge for fairness of comparisons and did not use ensembling. The error rate of R2-D2 is the average error rate of the last epoch over five random data splits. The results of 100\% Supervised is quoted from \citet{Temporal_Ensembling}. Using 10000 labeled images achieved $38.36\%$ error rates in our experiments. With unlabeled images, R2-D2 produced a $32.87\%$ error rate which is lower than others (e.g., Temporal Ensembling, LP~\citep{cvpr2019_pseudo_label}, Mean Teacher~\citep{Mean_teacher}, LP + Mean Teacher~\citep{cvpr2019_pseudo_label}, and DCT). The performances of Mean Teacher~\citep{Mean_teacher} is quoted from \citet{cvpr2019_pseudo_label}.

\subsection{Results on CIFAR-10}

\begin{table}
	\caption{Error rates (\%) on CIFAR-10 benchmark with 4000 images labeled.}
	\label{tab:CIFAR-10-result-table}
	\centering
	\small
	\begin{tabular}{lcr@{.}l}
		\toprule
		Method							&	Backbone	&	\multicolumn{2}{c}{Error rates (\%)}		\\
		\midrule
		
		100\% Supervised				&	Shake-Shake		&	$2$&$86$			\\
		
		Only 4000 labeled images 		&	Shake-Shake		&	$14$&$90\pm0.28$	\\
		
		\midrule
		
		Mean Teacher					&	ConvLarge		&	$12$&$31\pm0.28$	\\
		
		Temporal Ensembling				&	ConvLarge		&	$12$&$16\pm0.24$	\\
		
		VAT+EntMin						&	ConvLarge		&	$10$&$55\pm0.05$	\\
		
		DCT with 8 Views				&	ConvLarge		&	$8$&$35\pm0.06$	\\
		
		Mean Teacher					&	Shake-Shake		&	$6$&$28\pm0.15$	\\
		
		HybridNet						&	Shake-Shake		&	$6$&$09$			\\
		
		R2-D2 							&	Shake-Shake		&$\textbf{5}$&$\textbf{72}\pm\textbf{0.06}$	\\
		\bottomrule
	\end{tabular}
\end{table}

We evaluated the performance of R2-D2 on CIFAR-10 with 4000 labeled samples. Table~\ref{tab:CIFAR-10-result-table} presents the results. Following \citet{Mean_teacher,HybridNet}, we used the Shake-Shake network~\citep{shakeshake} as the backbone network. Overall, using Shake-Shake backbone network can achieves lower error rates than using ConvLarge. Our experiment was repeated five times with different random subsets of labeled training samples. We used the test error rates of the last epoch. After the first stage, the backbone network produced the error rates 14.90\%, which is our baseline using 4000 labeled samples. With the help of unlabeled images, R2-D2 obtains an error rate of $5.72\%$.Compared with Mean Teacher~\citep{Mean_teacher} and HybridNet~\citep{HybridNet}, R2-D2 achieves lower error rate and produces state-of-the-art results. 

\subsection{Results on SVHN}

\begin{table}
	\caption{Error rates (\%) on SVHN benchmark with 1000 images labeled.}
	\label{tab:SVHN-result-table}
	\centering
	\small
	\begin{tabular}{lcr@{.}l}
		\toprule
		Method						&	Backbone	&	\multicolumn{2}{c}{Error rates (\%)}		\\
		\midrule
		
		100\% Supervised			&	ConvLarge	&	$2$&$88\pm0.03$	\\
		
		Only 1000 labeled images 	&	ConvLarge	&	$11$&$27\pm0.85$	\\
		\midrule
		Temporal Ensembling			&	ConvLarge	&	$4$&$42\pm0.16$	\\
		
		VAdD (KL)         & ConvLarge & $4$&$16\pm0.08$ \\
		
		Mean Teacher				&	ConvLarge	&	$3$&$95\pm0.19$	\\	
		
		VAT+EntMin					&	ConvLarge	&	$3$&$86\pm0.11$	\\
		
		VAdD (KL) + VAT       & ConvLarge & $3$&$55\pm0.05$ \\
		
		DCT with 8 Views			&	ConvLarge	&$\textbf{3}$&$\textbf{29}\pm\textbf{0.03}$\\
		
		R2-D2						&	ConvLarge	&	$3$&$64\pm0.20$			\\
		\bottomrule
	\end{tabular}
\end{table}

We tested R2-D2 on SVHN with 1000 labeled samples. The results are shown in Table~\ref{tab:SVHN-result-table}.
Following previous works~\citep{Temporal_Ensembling,Mean_teacher,VAT,DCT_2018_ECCV}, we used the ConvLarge network as the backbone network. The result we report is average error rate of the last epoch over five random data splits. On this task, the gap between 100\% supervised and many SSL methods (e.g., VAT+EntMin~\citep{VAT}, VAdD (KL)+VAT~\citep{VAdD}, Deep Co-Training~\citep{DCT_2018_ECCV}, and R2-D2) is less than 1\%. Only Deep Co-Training with 8 Views~\citep{DCT_2018_ECCV} and VAdD (KL)+VAT slightly outperform R2-D2. Compared with other methods (e.g., Temporal Ensembling, Mean Teacher, and VAT, R2-D2 produces a lower error rate. Note that on the large-scale ImageNet, R2-D2 significantly outperformed Deep Co-Training. VAdD have not be evaluated on ImageNet in their paper.

\subsection{Combine R2-D2 with other SSL method}

\begin{table}
	\caption{Error rates (\%) on CIFAR-10 benchmark with 4000 images labeled. $\dag$ denotes results reported in the original papers. MT means Mean Teacher. }
	\label{tab:CIFAR-10-SWA-R2D2-table}
	\centering
	\small
	\begin{tabular}{lcc}
		\toprule
		Method							&	Backbone	&	Error rates (\%)	\\
		\midrule
		
		MT + fast-SWA (1200)$^\dag$		&	ConvLarge		&	$9.05$	\\
		
		MT + fast-SWA (1200)			&	ConvLarge		&	$9.70$	\\
		
		MT + fast-SWA (1200) + R2-D2 	&	ConvLarge		&	$9.27$	\\
		
		\midrule
		
		MT + SWA (1200)$^\dag$			&	ConvLarge		&	$9.38$	\\
		
		MT + SWA (1200)					&	ConvLarge		&	$9.37$	\\
		
		MT + SWA (1200) + R2-D2 		&	ConvLarge		&	$9.11$	\\
		
		\bottomrule
	\end{tabular}
\end{table}

Now, we study if R2-D2 can boost other SSL methods' performance. Note that the overall R2-D2 algorithm consists of three stages. In the raw first stage, we only use the labeled images to train the backbone network. Combining R2-D2 with other SSL method, we can use other SSL method as the first stage in our algorithm. That means we use aother SSL method to train the network with labeled and unlabeled images. Then, we use the trained network to initialize our framework and continue to train the network by R2-D2. Table~\ref{tab:CIFAR-10-SWA-R2D2-table} presents the results and R2-D2 indeed boosts other SSL method performance. Our implementation of MT~+~fast-SWA~(1200)~\citep{SWA} achieve $9.70\%$ error rate. And with the help of R2-D2, the error rate is $9.27\%$ which is $0.43\%$ lower. Combining R2-D2 with MT~+~SWA~(1200)~\citep{SWA} results in $9.11\%$ error rate which is better than $9.37\%$ of MT~+~SWA~(1200).

\subsection{Realistic evaluation of R2-D2}

\begin{table}
	\caption{Error rates (\%) on CIFAR-10 benchmark with 4000 labeled images and balanced/unbalanced unlabeled images. All expriments use the Shake-Shake backbone network.}
	\label{tab:CIFAR-10-unbalance-table}
	\centering
	\small
	\begin{tabular}{ccc}
		\toprule
		\multirow{2}*{Unlabeled data}	&	\multicolumn{2}{c}{Error rates (\%)}	\\
		& Mean Teacher & R2-D2 \\
		\midrule
		
		46000 balanced 		&	$6.60$	&	$5.72$	\\
		
		23000 balanced 		&	$9.28$	&	$8.08$	\\
		
		23000 unbalanced 	&	$9.72$	&	$9.52$	\\
		
		\bottomrule
	\end{tabular}
\end{table}

\begin{table}
	\caption{Error rates (\%) on CIFAR-10 benchmark with 4000 labeled images and open world assumption. All expriments use the Shake-Shake backbone network. }
	\label{tab:CIFAR-10-open-table}
	\centering
	\small
	\begin{tabular}{ccr@{.}lr@{.}l}
		\toprule
		\multicolumn{2}{c}{Unlabeled data} & \multicolumn{4}{c}{Error rates (\%)} \\
		CIFAR-10 & CIFAR-100 & \multicolumn{2}{c}{Mean Teacher} & \multicolumn{2}{c}{R2-D2}	\\
		\midrule
		
		46000 balanced		&	0 	&	$6$&$60$	&	$5$&$72$	\\
		
		46000 balanced		& 5000 	&	$7$&$00$	&	$6$&$41$	\\
		
		\midrule
		
		23000 balanced 		& 0		&	$9$&$28$	&	$8$&$08$	\\
		
		23000 balanced		& 12000	&	$9$&$68$	&	$8$&$99$	\\
		
		23000 unbalanced 	& 0		&	$9$&$72$	&	$9$&$52$	\\
		
		23000 unbalanced	& 12000	&	$10$&$85$	&	$10$&$41$	\\
		
		\midrule
		
		23000 balanced		& 23000	&	$28$&$66$	&	$15$&$32$	\\
		
		23000 unbalanced	& 23000	&	$28$&$15$	&	$18$&$05$	\\
		
		\bottomrule
	\end{tabular}
\end{table}

In this section, we evaluate R2-D2 under more realistic experiment setting. As \citet{Realistic_Eval} pointed out, in ``real-world'', the unlabeled data may be unbalanced and even contain a different
distribution of classes than the labeled data. First, we study the sensitiveness of our method when trained with unbalanced unlabeled data. Table~\ref{tab:CIFAR-10-unbalance-table} shows the results. ``46000 balanced'' means the typical setting that is using 4000 labeled data and 46000 unlabeled data of CIFAR-10. ``23000 balanced'' denotes using 23000 balanced unlabel data (2300 per class). At last, we produce 23000 unbalanced unlabeled data by random sampling. Each class contains 2770, 3452, 2042, 4062, 4047, 758, 590, 2588, 2201, 490 images, respectively. According to the experimental results, R2-D2 and Mean Teacher are more sensitive to the number of unlabeled data. When using a half but balanced unlabeled data, the performances are degraded by $2.36\%$ and $2.68\%$, respectively. However, the gaps of error rates between ``23000 balanced'' and ``23000 unbalanced'' are only $1.44\%$ and $0.44\%$, respectively.

Finally, we devise the experiments to simulate the situation that the unlabeled data contain a different distribution of classes than the labeled data. And we call it ``open world assumption''. Because the classes of CIFAR-100 are different from that of CIFAR-10, we select some images in CIFAR-100 to add to the unlabeled data. Table~\ref{tab:CIFAR-10-open-table} presents the results. The model performance can often be significantly degraded when adding CIFAR-100 images. Because we predict the pseudo-labels of unlabeled data repetitively, we can use the entropy of pseudo-labels to estimate if the unlabeled images belong to CIFAR-10. An effective remedy is to throw away $10\%$ unlabeled data whose pseudo-label entropy are larger than others after each reprediction. With this remedy, R2-D2 achieves better performance than Mean Teacher. However, when adding 23000 CIFAR-100 images, the error rates of both methods are higher than that of only using labeled data. It is still an open problem to make sure the network indeed benefit from the unlabeled data whose distribution is different from the labeled data.

\section{Conclusion}
In this paper, we proposed R2-D2, a method for semi-supervised deep learning. D2 uses label probability distributions as pseudo-labels for unlabeled images and optimizes them during training. Unlike previous SSL methods, D2 is an end-to-end framework, which is independent of the backbone network and can be trained by back-propagation. Based on D2, we give a theoretical support for using network predictions as pseudo-labels. However, pseudo-labels will become flat during training. We analyzed this problem both theoretically and experimentally, and proposed the R2 remedy for it. At last, we tested R2-D2 on different datasets. The experiments demonstrated superior performance of our proposed methods. On large-scale dataset ImageNet, R2-D2 achieved about $5\%$ lower error rates than that of previous state-of-the-art. In the future, we will further explore the combination of unsupervised feature learning and semi-supervised learning, and deep SSL in the open world assumption.

%\begin{acknowledgements}
%If you'd like to thank anyone, place your comments here
%and remove the percent signs.
%\end{acknowledgements}

% Authors must disclose all relationships or interests that 
% could have direct or potential influence or impart bias on 
% the work: 
%
% \section*{Conflict of interest}
%
% The authors declare that they have no conflict of interest.

% BibTeX users please use one of
\bibliographystyle{spbasic}      % basic style, author-year citations
\bibliography{ref}   % name your BibTeX data base

\begin{thebibliography}{32}
\providecommand{\natexlab}[1]{#1}
\providecommand{\url}[1]{{#1}}
\providecommand{\urlprefix}{URL }
\expandafter\ifx\csname urlstyle\endcsname\relax
  \providecommand{\doi}[1]{DOI~\discretionary{}{}{}#1}\else
  \providecommand{\doi}{DOI~\discretionary{}{}{}\begingroup
  \urlstyle{rm}\Url}\fi
\providecommand{\eprint}[2][]{\url{#2}}

\bibitem[{Athiwaratkun et~al.(2019)Athiwaratkun, Finzi, Izmailov, and
  Wilson}]{SWA}
Athiwaratkun B, Finzi M, Izmailov P, Wilson AG (2019) There are many consistent
  explanations of unlabeled data: Why you should average. In: The International
  Conference on Learning Representations (ICLR), pp 1--22

\bibitem[{DeVries and Taylor(2017)}]{cutout}
DeVries T, Taylor GW (2017) Improved regularization of convolutional neural
  networks with cutout. arXiv preprint arXiv:170804552

\bibitem[{Gao et~al.(2017)Gao, Xing, Xie, Wu, and Geng}]{DLDL}
Gao BB, Xing C, Xie CW, Wu J, Geng X (2017) Deep label distribution learning
  with label ambiguity. IEEE Transactions on Image Processing 26(6):2825--2838

\bibitem[{Gastaldi(2017)}]{shakeshake}
Gastaldi X (2017) Shake-shake regularization of 3-branch residual networks. In:
  The International Conference on Learning Representations (ICLR), Workshop
  Track Proceedings, pp 1--5

\bibitem[{Gidaris et~al.(2018)Gidaris, Singh, and Komodakis}]{RotNet}
Gidaris S, Singh P, Komodakis N (2018) Unsupervised representation learning by
  predicting image rotations. In: The International Conference on Learning
  Representations (ICLR), pp 1--16

\bibitem[{He et~al.(2016{\natexlab{a}})He, Zhang, Ren, and Sun}]{ResNet}
He K, Zhang X, Ren S, Sun J (2016{\natexlab{a}}) Deep residual learning for
  image recognition. In: The IEEE Conference on Computer Vision and Pattern
  Recognition (CVPR), pp 770--778

\bibitem[{He et~al.(2016{\natexlab{b}})He, Zhang, Ren, and Sun}]{ResNetv2}
He K, Zhang X, Ren S, Sun J (2016{\natexlab{b}}) Identity mappings in deep
  residual networks. In: The European Conference on Computer Vision (ECCV),
  LNCS, vol 9908, Springer, pp 630--645

\bibitem[{Iscen et~al.(2019)Iscen, Tolias, Avrithis, and
  Chum}]{cvpr2019_pseudo_label}
Iscen A, Tolias G, Avrithis Y, Chum O (2019) Label propagation for deep
  semi-supervised learning. In: The IEEE Conference on Computer Vision and
  Pattern Recognition (CVPR), pp 5070--5079

\bibitem[{Krizhevsky and Hinton(2009)}]{cifar}
Krizhevsky A, Hinton G (2009) Learning multiple layers of features from tiny
  images. Tech. rep., University of Toronto

\bibitem[{Laine and Aila(2017)}]{Temporal_Ensembling}
Laine S, Aila T (2017) Temporal ensembling for semi-supervised learning. In:
  The International Conference on Learning Representations (ICLR), pp 1--13

\bibitem[{LeCun et~al.(1998)LeCun, Bottou, Bengio, and Haffner}]{LeNet}
LeCun Y, Bottou L, Bengio Y, Haffner P (1998) Gradient-based learning applied
  to document recognition. Proceedings of the IEEE 86(11):2278--2324

\bibitem[{Lee(2013)}]{pseudo_label}
Lee DH (2013) Pseudo-label: The simple and efficient semi-supervised learning
  method for deep neural networks. In: Workshop on Challenges in Representation
  Learning, ICML, vol~3, p~2

\bibitem[{Liu et~al.(2018)Liu, Song, Shao, Jin, and Wang}]{TCP}
Liu Y, Song G, Shao J, Jin X, Wang X (2018) Transductive centroid projection
  for semi-supervised large-scale recognition. In: The European Conference on
  Computer Vision (ECCV), LNCS, vol 11209, Springer, pp 72--89

\bibitem[{Loshchilov and Hutter(2017)}]{cosine_lr}
Loshchilov I, Hutter F (2017) {SGDR}: Stochastic gradient descent with warm
  restarts. In: The International Conference on Learning Representations
  (ICLR), pp 1--16

\bibitem[{Lu and Peng(2013)}]{IJCV_semi}
Lu Z, Peng Y (2013) Exhaustive and efficient constraint propagation: A
  graph-based learning approach and its applications. International Journal of
  Computer Vision 103(3):306--325

\bibitem[{Miyato et~al.(2018)Miyato, Maeda, Ishii, and Koyama}]{VAT}
Miyato T, Maeda Si, Ishii S, Koyama M (2018) Virtual adversarial training: a
  regularization method for supervised and semi-supervised learning. IEEE
  Transactions on Pattern Analysis and Machine Intelligence pp 1979--1993

\bibitem[{Netzer et~al.(2011)Netzer, Wang, Coates, Bissacco, Wu, and Ng}]{SVHN}
Netzer Y, Wang T, Coates A, Bissacco A, Wu B, Ng AY (2011) Reading digits in
  natural images with unsupervised feature learning. In: NIPS Workshop on Deep
  Learning and Unsupervised Feature Learning

\bibitem[{Oliver et~al.(2018)Oliver, Odena, Raffel, Cubuk, and
  Goodfellow}]{Realistic_Eval}
Oliver A, Odena A, Raffel CA, Cubuk ED, Goodfellow I (2018) Realistic
  evaluation of deep semi-supervised learning algorithms. In: Advances in
  Neural Information Processing Systems 31, pp 3235--3246

\bibitem[{Park et~al.(2018)Park, Park, Shin, and Moon}]{VAdD}
Park S, Park J, Shin SJ, Moon IC (2018) Adversarial dropout for supervised and
  semi-supervised learning. In: Thirty-Second AAAI Conference on Artificial
  Intelligence, pp 3917--3924

\bibitem[{Paszke et~al.(2019)Paszke, Gross, Massa, Lerer, Bradbury, Chanan,
  Killeen, Lin, Gimelshein, Antiga, Desmaison, Kopf, Yang, DeVito, Raison,
  Tejani, Chilamkurthy, Steiner, Fang, Bai, and Chintala}]{pytorch}
Paszke A, Gross S, Massa F, Lerer A, Bradbury J, Chanan G, Killeen T, Lin Z,
  Gimelshein N, Antiga L, Desmaison A, Kopf A, Yang E, DeVito Z, Raison M,
  Tejani A, Chilamkurthy S, Steiner B, Fang L, Bai J, Chintala S (2019)
  Pytorch: An imperative style, high-performance deep learning library. In:
  Advances in Neural Information Processing Systems 32, pp 8024--8035

\bibitem[{Pu et~al.(2016)Pu, Gan, Henao, Yuan, Li, Stevens, and Carin}]{VAE}
Pu Y, Gan Z, Henao R, Yuan X, Li C, Stevens A, Carin L (2016) Variational
  autoencoder for deep learning of images, labels and captions. In: Advances in
  Neural Information Processing Systems 29, pp 2352--2360

\bibitem[{Qiao et~al.(2018)Qiao, Shen, Zhang, Wang, and Yuille}]{DCT_2018_ECCV}
Qiao S, Shen W, Zhang Z, Wang B, Yuille A (2018) Deep co-training for
  semi-supervised image recognition. In: The European Conference on Computer
  Vision (ECCV), LNCS, vol 11219, Springer, pp 142--159

\bibitem[{Robert et~al.(2018)Robert, Thome, and Cord}]{HybridNet}
Robert T, Thome N, Cord M (2018) {HybridNet}: Classification and reconstruction
  cooperation for semi-supervised learning. In: The European Conference on
  Computer Vision (ECCV), LNCS, vol 11211, Springer, pp 158--175

\bibitem[{Russakovsky et~al.(2015)Russakovsky, Deng, Su, Krause, Satheesh, Ma,
  Huang, Karpathy, Khosla, Bernstein, Berg, and Fei-Fei}]{imagenet}
Russakovsky O, Deng J, Su H, Krause J, Satheesh S, Ma S, Huang Z, Karpathy A,
  Khosla A, Bernstein M, Berg AC, Fei-Fei L (2015) {ImageNet} large scale
  visual recognition challenge. International Journal of Computer Vision
  115(3):211--252

\bibitem[{Sajjadi et~al.(2016)Sajjadi, Javanmardi, and
  Tasdizen}]{Stochastic_Transformations}
Sajjadi M, Javanmardi M, Tasdizen T (2016) Regularization with stochastic
  transformations and perturbations for deep semi-supervised learning. In:
  Advances in Neural Information Processing Systems 29, pp 1163--1171

\bibitem[{Tanaka et~al.(2018)Tanaka, Ikami, Yamasaki, and
  Aizawa}]{Noisy_Labels}
Tanaka D, Ikami D, Yamasaki T, Aizawa K (2018) Joint optimization framework for
  learning with noisy labels. In: The IEEE Conference on Computer Vision and
  Pattern Recognition (CVPR), pp 5552--5560

\bibitem[{Tarvainen and Valpola(2017)}]{Mean_teacher}
Tarvainen A, Valpola H (2017) Mean teachers are better role models:
  Weight-averaged consistency targets improve semi-supervised deep learning
  results. In: Advances in Neural Information Processing Systems 30, pp
  1195--1204

\bibitem[{Wang and Wu(2020)}]{R2-D2}
Wang GH, Wu J (2020) Repetitive reprediction deep decipher for semi-supervised
  learning. In: Thirty-Fourth AAAI Conference on Artificial Intelligence, p in
  press

\bibitem[{Weston et~al.(2012)Weston, Ratle, Mobahi, and
  Collobert}]{deep_label_prop}
Weston J, Ratle F, Mobahi H, Collobert R (2012) Deep learning via
  semi-supervised embedding. In: Montavon G, Orr GB, M{\"u}ller KR (eds) Neural
  Networks: Tricks of the Trade: Second Edition, Springer, pp 639--655

\bibitem[{Yi and Wu(2019)}]{PENCIL}
Yi K, Wu J (2019) Probabilistic end-to-end noise correction for learning with
  noisy labels. In: The IEEE Conference on Computer Vision and Pattern
  Recognition (CVPR), pp 7017--7025

\bibitem[{Zhai et~al.(2019)Zhai, Oliver, Kolesnikov, and Beyer}]{S4L}
Zhai X, Oliver A, Kolesnikov A, Beyer L (2019) {S4L}: Self-supervised
  semi-supervised learning. In: The IEEE International Conference on Computer
  Vision (ICCV), pp 1476--1485

\bibitem[{Zhu and Ghahramani(2002)}]{label_prop}
Zhu X, Ghahramani Z (2002) Learning from labeled and unlabeled data with label
  propagation. Tech. Rep. CMU-CALD-02-107, Carnegie Mellon University

\end{thebibliography}

% Non-BibTeX users please use
%\begin{thebibliography}{}
%
% and use \bibitem to create references. Consult the Instructions
% for authors for reference list style.
%
%\bibitem{RefJ}
% Format for Journal Reference
%Author, Article title, Journal, Volume, page numbers (year)
% Format for books
%\bibitem{RefB}
%Author, Book title, page numbers. Publisher, place (year)
% etc
%\end{thebibliography}

\end{document}